\documentclass{article}

\usepackage{microtype}
\usepackage{graphicx}
\usepackage{subfigure}
\usepackage{booktabs} % for professional tables
\usepackage{bm}
\usepackage{titletoc}
\usepackage{hyperref}

\usepackage[accepted]{icml2026}   %accepted

\usepackage{amsmath}
\usepackage{amssymb}
\usepackage{mathtools}
\usepackage{amsthm}

\usepackage[capitalize,noabbrev]{cleveref}

\theoremstyle{plain}
\newtheorem{theorem}{Theorem}[section]
\newtheorem{proposition}[theorem]{Proposition}

\theoremstyle{definition}
\newtheorem{definition}[theorem]{Definition}

\theoremstyle{remark}

\usepackage[textsize=tiny]{todonotes}

\usepackage{enumitem}
\usepackage{bbm}

\usepackage{multirow}

\icmltitlerunning{Reasoning on the Manifold: Bidirectional Consistency for Self-Verification in Diffusion Language Models}

\begin{document}

\twocolumn[
\icmltitle{Reasoning on the Manifold: Bidirectional Consistency for Self-Verification \\in Diffusion Language Models}

% It is OKAY to include author information, even for blind
% submissions: the style file will automatically remove it for you
% unless you've provided the [accepted] option to the icml2025
% package.

% List of affiliations: The first argument should be a (short)
% identifier you will use later to specify author affiliations
% Academic affiliations should list Department, University, City, Region, Country
% Industry affiliations should list Company, City, Region, Country

% You can specify symbols, otherwise they are numbered in order.
% Ideally, you should not use this facility. Affiliations will be numbered
% in order of appearance and this is the preferred way.
\icmlsetsymbol{equal}{*}

\begin{icmlauthorlist}
\icmlauthor{Jiaoyang Ruan}{equal,fdu}
\icmlauthor{Xin Gao}{equal,fdu,ailab}
\icmlauthor{Yinda Chen}{ustc}
\icmlauthor{Hengyu Zeng}{fdu}
\icmlauthor{Liang Du}{tencent}
\icmlauthor{Guanghao Li}{fdu}
\icmlauthor{Jie Fu}{ailab}
\icmlauthor{Jian Pu}{fdu}
\end{icmlauthorlist}

\icmlaffiliation{fdu}{Institute of Science and Technology for Brain-Inspired Intelligence, Fudan University, Shanghai, China}
\icmlaffiliation{ailab}{Shanghai Artificial Intelligence Laboratory, Shanghai, China}
\icmlaffiliation{ustc}{University of Science and Technology of China, Hefei, China}
\icmlaffiliation{tencent}{IEG, Tencent Inc., Shenzhen, China}

\icmlcorrespondingauthor{Jian Pu and Jie Fu}{jianpu@fudan.edu.cn, jiefu@pjlab.org}

% You may provide any keywords that you
% find helpful for describing your paper; these are used to populate
% the "keywords" metadata in the PDF but will not be shown in the document
\icmlkeywords{Machine Learning, ICML}

\vskip 0.3in
]

% this must go after the closing bracket ] following \twocolumn[ ...

% This command actually creates the footnote in the first column
% listing the affiliations and the copyright notice.
% The command takes one argument, which is text to display at the start of the footnote.
% The \icmlEqualContribution command is standard text for equal contribution.
% Remove it (just {}) if you do not need this facility.

%\printAffiliationsAndNotice{}  % leave blank if no need to mention equal contribution
\printAffiliationsAndNotice{\icmlEqualContribution} % otherwise use the standard text.

\begin{abstract}
% While Diffusion Large Language Models (dLLMs) offer global planning capabilities for complex reasoning, verifying their outputs efficiently remains an underexplored challenge. In this work, we propose a geometric perspective on verification: \textit{Reasoning on the Manifold}. We hypothesize that correct reasoning chains reside on the high-density manifold of the learned distribution, exhibiting intrinsic stability, whereas incorrect solution drifts off-manifold. To quantify this, we introduce \textbf{Bidirectional Manifold Consistency (BMC)}, a training-free, unsupervised metric that measures the stability of a solution under a forward-masking and backward-reconstruction cycle. We demonstrate BMC's versatility across the reasoning lifecycle. For error diagnosis, BMC achieves state-of-the-art unsupervised detection on four reasoning-intensive benchmarks. As an inference-time guidance signal, it enables adaptive self-correction. When employed as a dense reward for Reinforcement Learning, BMC empowers models to self-evolve, outperforming pure outcome rewards. Our results establish intrinsic geometric stability as a robust proxy for logical correctness.

While Diffusion Large Language Models (dLLMs) offer structural advantages for global planning, efficiently verifying that they arrive at correct answers via valid reasoning traces remains a critical challenge. In this work, we propose a geometric perspective: Reasoning on the Manifold. We hypothesize that valid generation trajectories reside as stable attractors on the high-density manifold of the learned distribution, whereas invalid paths exhibit off-manifold drift. To operationalize this, we introduce Bidirectional Manifold Consistency (BMC), a training-free, unsupervised metric that quantifies the stability of the generated sequence through a forward-masking and backward-reconstruction cycle. Empirically, we demonstrate BMC's versatility across the full reasoning lifecycle: (1) in Diagnosis, it serves as a robust discriminator of solution validity without ground truth answer; (2) in Inference, it enables rejection resampling to effectively concentrate computational resources on complex reasoning tasks; and (3) in Alignment, it functions as a dense geometric reward that transforms sparse outcome supervision into fine-grained guidance, empowering models to self-evolve beyond standard baselines. Our results establish intrinsic geometric stability as a robust indicator of correctness for dLLMs.
% Theoretically, we show that BMC captures the local geometry of the diffusion density, serving as a principled measure of structural integrity. 
\end{abstract}
% 摘要一般很少写超过某个具体的方法的

% 逻辑
% DLMs发展的很好，但有challenge(1-2句话）
% 我们的核心insight 1
% 我们的解决方法 2-3
% 实验验证结果 2-3

\section{Introduction}

% 第一段：从AR到DLM的范式转变，DLM的特点：backward/forwad bidirectional process，DLM对reasoning/complex tasks的好处：global planning, iterative refinement
Large Language Models (LLMs) have fundamentally transformed natural language processing, yet the dominant autoregressive (AR) paradigm remains constrained by its strict left-to-right order. This sequential dependency limits the ability to perform global planning or revise earlier decisions during complex tasks. Recently, diffusion Large Language Models (dLLMs) have emerged as a compelling non-AR alternative \citep{li2025survey, yu2025discrete} that offers structural advantages suited for reasoning. Unlike the causal masking of AR models, dLLMs employ full attention to perceive the entire context \citep{nie2025large, ye2025dream}, which enables the global planning of reasoning structures. Moreover, their generative dynamics are governed by a bidirectional process of noise injection and denoising \citep{austin2021structured}. This temporal structure facilitates the iterative deliberation characteristic of System 2 reasoning, enabling the model to revisit tokens for fine-grained control and global optimization \citep{zhao2025d1}.

% Large Language Models (LLMs) have fundamentally transformed natural language processing, yet the dominant autoregressive (AR) paradigm remains constrained by its strict left-to-right generation order. This sequential nature often limits the model's ability to perform global planning or revise earlier decisions during complex reasoning tasks. Recently, Diffusion Language Models (DLMs) have emerged as a compelling non-autoregressive alternative \citep{li2022diffusion, chen2023analog, he2024llada}. By modeling text generation as an iterative token unmasking process, DLMs enable flexible refinement capabilities, such as controllable infilling, non-monotonic editing, and global coherence optimization. However, despite these architectural advantages, DLMs face a critical hurdle shared by their AR counterparts: \textit{How can we verify the correctness of generated reasoning without ground-truth supervision?}

\begin{figure}[t]
    \centering
    \includegraphics[width=0.95\linewidth]{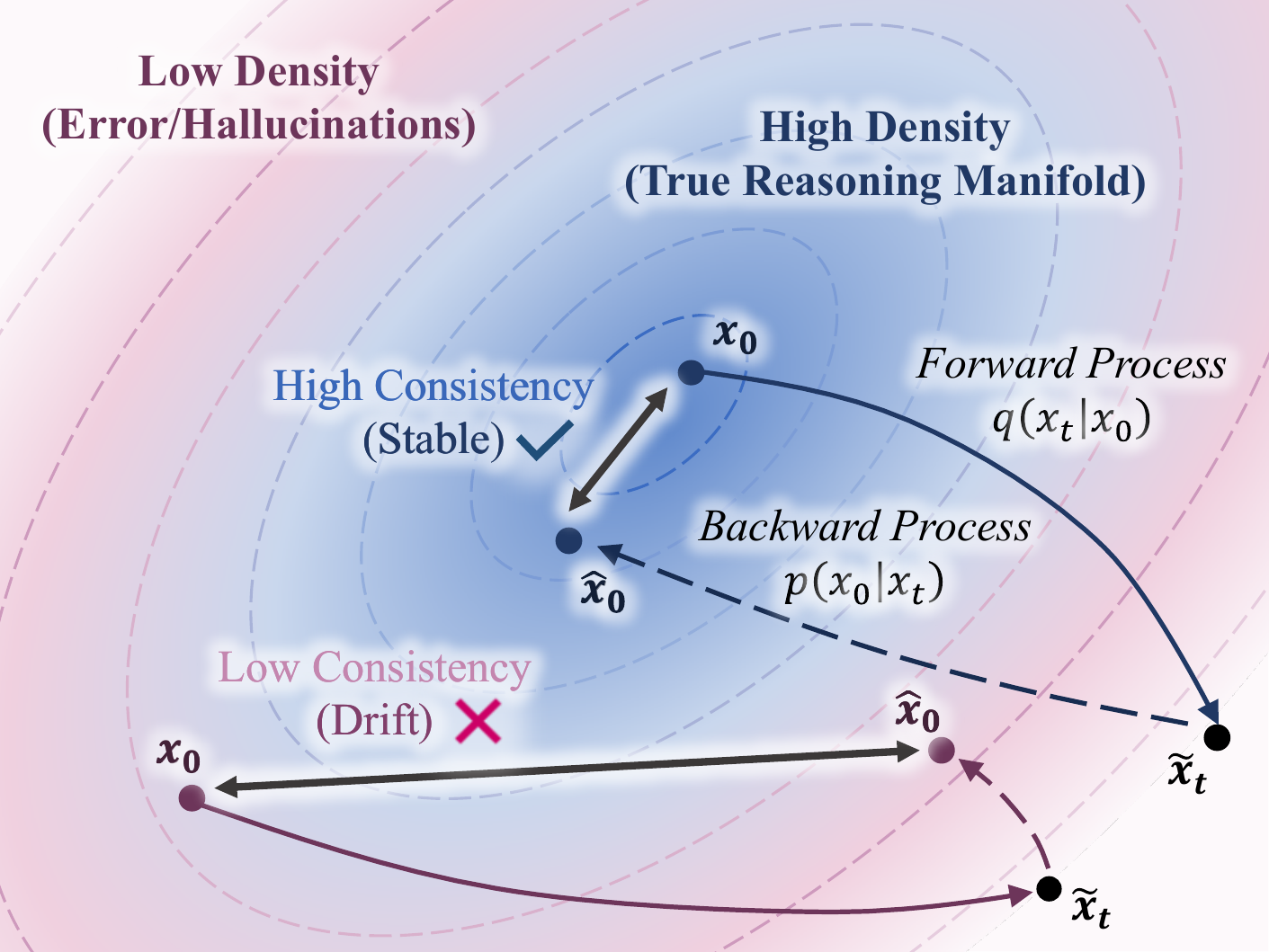}

\caption{
    \textbf{Geometric Intuition of BMC.} BMC evaluates the validity of $x_0$ by probing its stability under a forward-backward cycle. Valid solutions (blue) function as stable attractors on the high-density manifold, enabling faithful reconstruction ($\hat{x}_0 \approx x_0$). Conversely, erroneous outputs (purple) exhibit off-manifold drift, causing the reconstruction to diverge significantly.
}
    \label{fig:motivation}
       \vskip -0.1in
\end{figure}

% 第二段：转移到verificaiton这个领域，但是现有验证方法的局限，也不是为dllm特意设计的（原来的引用太少了）
Despite these structural capabilities, dLLMs face a critical challenge shared by the broader landscape of generative AI: the verification of generation reliability. As models are increasingly deployed for complex tasks, the risk of hallucinations and error accumulation within reasoning traces demands rigorous verification \citep{zhang2025siren, ji2023survey, gaosafe}. Current verification methods, primarily developed for the AR paradigm, typically rely on extrinsic signals that treat the model as a black box. For instance, Process Reward Models (PRMs) \citep{zheng2025processbench, wang2024math, lightman2023let} require expensive human annotation to train external discriminators, while sampling-based techniques like self-consistency \citep{chen2023universal, brown2024large, wang2022self} incur high computational overhead and often falter on hard samples where models systematically converge to incorrect consensus. Crucially, these generic strategies fail to exploit the unique probabilistic dynamics of diffusion models. This gap motivates a shift from external supervision to internal discovery: \textit{Does the dLLM generation process itself contain intrinsic signals correlated with solution validity?}

% Ensuring this correctness without ground-truth supervision remains an open challenge.

% Current verification methods largely rely on external signals. Process Reward Models (PRMs) \citep{lightman2023let, uesato2022solving} require extensive human annotation and are prone to distribution shifts and reward hacking. Sampling-based methods like Self-Consistency \citep{wang2022self} incur high computational costs by generating multiple candidate solutions. This creates a dependency bottleneck: the model cannot autonomously assess the quality of its own reasoning. This motivates a fundamental question: \textit{Does the generative process of the diffusion model itself contain an intrinsic signal correlated with reasoning correctness?}

% 第三段：The Core Insight，intrinsic signals，几何视角=>流形上的推理，结合figure1，提出Bidirectional Manifold Consistency的概念
We answer affirmatively by proposing a geometric perspective: \textit{Reasoning on the Manifold}, which hypothesizes that valid solutions reside on the high-density manifold as stable attractors, whereas erroneous sequences exhibit detectable off-manifold drift.
As illustrated in Figure~\ref{fig:motivation}, this geometric distinction dictates reconstruction stability: valid solutions recover faithfully after perturbation. Conversely, erroneous outputs undergo semantic drift, characterized by high inconsistency where the reconstructed trajectory diverges significantly from the original generation.
To quantify this, we introduce an unsupervised metric \textbf{Bidirectional Manifold Consistency (BMC)}. Theoretically, BMC approximates log-likelihood by measuring local geometric stability. Crucially, this evaluation requires only a few denoising steps for reconstruction, allowing BMC to serve as a robust, training-free indicator of solution validity with minimal overhead.

To our knowledge, this is the first work to systematically exploit the intrinsic dynamics of dLLMs for self-verification. We validate BMC as a unified framework: 
(1) \textbf{\textit{Diagnosis}}: BMC demonstrates superior discriminative power in error detection across four reasoning benchmarks, significantly outperforming consistency baselines; (2) \textbf{\textit{Inference}}: We introduce Manifold-Guided Iterative Resampling (MGIR), which uses BMC to effectively concentrate computational resources on complex queries; and (3) \textbf{\textit{Alignment}}: In Reinforcement Learning (RL), BMC augments sparse outcome rewards into dense guidance signals, enabling models to self-evolve towards higher logical consistency.

% We validate the utility of BCP through three experimental stages, spanning from diagnosis to alignment: (1) \textbf{Diagnostic}: On GSM8K \citep{cobbe2021training}, BCP outperforms likelihood baselines by 13\% in unsupervised correctness classification; (2) \textbf{Interventional}: When utilized as inference-time guidance, BCP improves generation accuracy by 8\% via self-correction; (3) \textbf{Alignment}: When employed as a dense reward for reinforcement learning, BCP enables the model to align with logical correctness in a fully self-supervised loop.

% 最后，contribution

Our main contributions are summarized as follows:
\begin{itemize}[leftmargin=*,noitemsep,topsep=0pt,partopsep=0pt]
    \item We propose Reasoning on the Manifold, a geometric perspective that anchors logical validity to the high-density regions of the learned distribution, treating correctness as an intrinsic manifold constraint.
    \item We introduce Bidirectional Manifold Consistency (BMC), an unsupervised metric that leverages the reversible dynamics of dLLMs to quantify generation correctness.
    \item We validate BMC across the full reasoning lifecycle, demonstrating its versatility in error diagnosis, inference-time self-correction, and RL alignment.
\end{itemize}

% 2026.1.8 我会再改几版

% \paragraph{Conflict of Interest Disclosure}
% The authors declare that they have no competing financial interests.

\section{Related Work} \label{sec:related_work}

% 我会再加几篇，比如西湖那篇RemeDi
\textbf{Diffusion Large Language Models.}
%说这里缩写存在有问题
dLLMs have emerged as a compelling non-autoregressive alternative to standard LLMs \citep{austin2021structured, hoogeboom2021argmax}. Unlike sequential AR generation, dLLMs leverage global context via all-to-all attention \citep{nie2025large}, effectively modeling generation as trajectory evolution on a learned data manifold \citep{song2020score}. Recent scaling efforts, such as LLaDA \citep{nie2025large} and Dream \citep{ye2025dream}, have demonstrated competitive performance on large-scale benchmarks. Notably, recent frameworks have harnessed this iterative nature for dynamic self-correction: RemeDi \citep{huang2025don} employs a dual-stream mechanism to identify and re-mask low-confidence tokens, while CDLM \citep{zhang2025corrective} explicitly trains models to detect and rectify errors through corrective post-training. While these approaches leverage diffusion dynamics for generative refinement, our work first formalizes implicit geometric properties into an explicit verification criterion.
%DLMs have emerged as a versatile non-autoregressive alternative to standard Large Language Models (LLMs) \citep{austin2021structured, hoogeboom2021argmax}. Unlike the unidirectional constraints of autoregressive generation, DLMs leverage bidirectional context through all-to-all attention \citep{nie2025large}, effectively modeling generation as \textit{trajectory evolution on a learned data manifold} \citep{song2020score}. Recent scaling efforts, such as LLaDA \citep{nie2025large}, demonstrate that DLMs can achieve competitive performance on large-scale benchmarks. Furthermore, DLMs show promise in complex planning \citep{ye2024beyond} and exhibit emergent self-correction capabilities when optimized via reinforcement learning (RL) \citep{zhao2025d1}. While these works acknowledge the density-seeking nature of diffusion, our work is the first to formalize these implicit geometric properties into an explicit, zero-shot verification criterion.
% xxxx xxx xxx xxxxx xxxx xxx xxxx xxxxx xxxx xxx xxx xxxx xxxxx xxx xxx xxx xxx xxxx xxx xxx xxxxxxx

% 这一部分我改成通用的reasoning verification了
\textbf{Verification of Reasoning.} 
Verifying reasoning chains is essential for enabling System 2 search capabilities. Existing approaches fall into three categories. First, discriminative methods like PRMs \citep{lightman2023let, wang2024math} provide granular supervision but require costly annotations or expensive rollouts. Second, generative verifiers \citep{zhang2024generative} leverage model reasoning to produce critiques, yet suffer from self-correction blind spots \citep{tsui2025self}, failing to detect errors without external ground truth. Third, execution-based frameworks like Loong \citep{huang2025loong} scale verification using code interpreters but depend on 12 domain-specific sandboxes, limiting applicability in open-ended tasks. BMC addresses these limitations by exploiting the white-box geometry of dLLMs to quantify solution stability via forward-backward cycles.
% This approach serves as a universal and training-free correctness proxy that operates independently of external resources.

% Verifying complex reasoning chains is essential for enabling System 2 search capabilities. Existing approaches fall into three categories. First, discriminative methods like PRMs \citep{lightman2023let, wang2024math} provide granular supervision but require labor-intensive annotations or expensive rollouts. Second, generative verifiers \citep{zhang2024generative} leverage model reasoning to produce critiques, yet suffer from self-correction blind spots \citep{tsui2025self}, failing to detect intrinsic errors without external ground truth. Third, execution-based frameworks like Loong \citep{huang2025loong} scale verification using code interpreters but depend on domain-specific sandboxes, limiting applicability in open-ended tasks. BMC departs from these by exploiting the white-box geometric dynamics of DLMs. Unlike likelihood methods requiring intractable ELBO \citep{austin2021structured}, BMC efficiently quantifies stability through forward-backward cycles, offering a universal, training-free proxy for correctness.

% 我也会加入几篇论文，比如TraceRL
\textbf{Self-Correction and Reward Modeling.} 
The intrinsic self-correction capability of LLMs remains a contentious topic; \citet{huang2023large} argue that models struggle to rectify errors in the absence of external feedback. Although adaptive sampling techniques \citep{kumar2024training} offer partial mitigation, they often necessitate task-specific retraining. In the domain of alignment, RLHF \citep{ouyang2022training} and GRPO \citep{shao2024deepseekmath} depend heavily on human preferences or ground-truth oracles. Self-rewarding frameworks \citep{yuan2024self} attempt to close this loop but are ultimately bottlenecked by the model's subjective biases. While recent dLLM-specific methods like TraceRL \citep{wang2025revolutionizing} enhance consistency by optimizing trajectory likelihood approximations over d1 \citep{zhao2025d1}, they operate implicitly rather than providing an explicit metric for trajectory consistency. BMC bridges these gaps via a unified geometric signal that enables training-free correction and serves as a dense reward for self-alignment.

\section{Theoretical Analysis}
\label{sec:theory}

In this section, we provide the preliminaries and theoretical analysis. We begin by defining BMC for dLLMs (\S\ref{sec:prelim}). We then establish its equivalence to log-likelihood maximization (\S\ref{sec:likelihood}) and characterize it geometrically as a measure of stability during manifold projection (\S\ref{sec:geometric}).

% 3.1 Preliminaries and Definition: 给出descrete diffusion的preliminiary、formation（包括前向反向+loss），给出R的理想形式数学期望定义
\subsection{Preliminaries and Definition}
\label{sec:prelim}

We consider the problem of verification over discrete sequences. Let $x_0 = [x_0^{(1)}, \dots, x_0^{(L)}]$ be a sequence of tokens where each $x_0^{(i)}$ belongs to a finite vocabulary $\mathcal{V}$. To model the generation process, we adopt the Masked dLLMs with notations following \citet{austin2021structured}:
% Unlike continuous diffusion which adds Gaussian noise, D3PM corrupts data via state transitions in a discrete space. Specifically, we utilize the \textit{absorbing state transition}, which is naturally aligned with the masking operations used in language modeling:

\textbf{Forward Diffusion Process.}
We define a forward process that progressively corrupts $x_0$ into a sequence of purely absorbing states. Let $m \notin \mathcal{V}$ denote a special \texttt{[MASK]} token. The transition probability at timestep $t$ is defined by the matrix $Q_t = (1 - \beta_t) I + \beta_t \mathbb{1} e_m^\top$, where $\beta_t$ controls the noise schedule.
Let $\alpha_t = 1 - \beta_t$ and $\bar{\alpha}_t = \prod_{s=1}^t \alpha_s$ denote the cumulative signal schedule. Applying this transition over time yields a factorized marginal $q(x_t | x_0) = \prod_{i} q(x_t^{(i)} | x_0^{(i)})$, where each token is independently retained or masked:
\begin{equation}
    q(x_t^{(i)} | x_0^{(i)}) = \bar{\alpha}_t \, \mathbb{I}(x_t^{(i)} = x_0^{(i)}) + (1 - \bar{\alpha}_t) \, \mathbb{I}(x_t^{(i)} = m).
\end{equation}
This formulation implies that at any step $t$, the sequence $x_t$ acts as a partial observation of $x_0$, retaining original tokens with probability $\bar{\alpha}_t$ while masking the rest.

\textbf{Reverse Denoising Process.}
To generate samples, we utilize the reverse process $p(x_{t-1} | x_t)$. We adopt the $x_0$-parameterization, where the model $f_\theta(x_t)$ is trained to predict the clean tokens $x_0$ directly. In the absorbing state setting, this objective is structurally equivalent to Masked Language Modeling (MLM).  The reverse step is analytically derived using the posterior rule by marginalizing the tractable forward posterior $q(x_{t-1} | x_t, x_0)$ over the model's predicted distribution $p_\theta(\tilde{x}_0 | x_t)$:
\begin{equation}
 p_\theta(x_{t-1} | x_t) \propto \sum_{\hat{x}_0 \in \mathcal{V}} q(x_{t-1} | x_t, \hat{x}_0) p_\theta(\hat{x}_0 | x_t).
\end{equation}
\vspace{-15pt}

The model is trained by minimizing a variational lower bound combined with an auxiliary cross-entropy loss. For absorbing state diffusion, only masked tokens contribute to the objective, which simplifies to:
\begin{equation}
    \label{eq:d3pm_loss}
    \mathcal{L}_{\text{D3PM}}(\theta) = \mathbb{E}_{t, x_0, \tilde{x}_t} [ - \sum_{i \in M_t} \log p_\theta(x_0^{(i)} | \tilde{x}_t) ],
\end{equation}
where $\tilde{x}_t \sim q(x_t|x_0)$ is the corrupted state and $M_t = \{ i \mid \tilde{x}_t^{(i)} = \texttt{[MASK]} \}$ denotes the masked indices. This objective upper-bounds the negative log-likelihood, encouraging recovery of the original tokens from corrupted states.

% \textbf{Negative Expected Reconstruction Divergence}
Based on this bidirectional mechanism, we quantify the validity of a generated sequence by measuring its reconstruction discrepancy under the diffusion process. This metric assesses how well the model can recover the original sequence $x_0$ when subjected to masking perturbations:

\begin{definition}[Bidirectional Manifold Consistency (BMC)]
\label{def:bmc}
Given a generated sequence $x_0$, let $\tilde{x}_t$ be the corrupted state obtained by re-masking under the forward process $q(x_t|x_0)$ at timestep $t$. Let $\hat{x}_0(\tilde{x}_t)$ denote the reconstruction derived using the denoiser $f_\theta$. The BMC score $\mathcal{R}_\mathcal{D}(x_0)$ is defined as the expected reconstruction similarity:
\begin{equation}
\label{eq:bmc}
\mathcal{R}_\mathcal{D}(x_0) := - \mathbb{E}_{t \sim \mathcal{U}(1, T), \tilde{x}_t} \left[ \mathcal{D}\left[x_0, \hat{x}_0(\tilde{x}_t)\right] \right],
\end{equation}
where $\mathcal{D}(\cdot, \cdot)$ is a general dissimilarity measure quantifying the discrepancy between original $x_0$ and the reconstruction. %Depending on the implementation, $\mathcal{D}$ can be insta dntiated as a token-wise divergence (e.g., Cross-Entropy) or a global metric in the embedding space (e.g., L2 norm of sequence embeddings).
\end{definition}

% Intuitively, $\mathcal{R}(x_0)$ serves as a proxy for the sequence's intrinsic stability. A high score indicates that the tokens form a tightly coupled semantic structure where any missing part can be robustly inferred from the remainder, a characteristic typical of valid reasoning chains on the data manifold.

% 2026.1.15 只改好了第一节，痛苦，其他章节规划好了内容，明天继续改

\begin{figure*}[t]
    \centering
    \includegraphics[width=\textwidth]{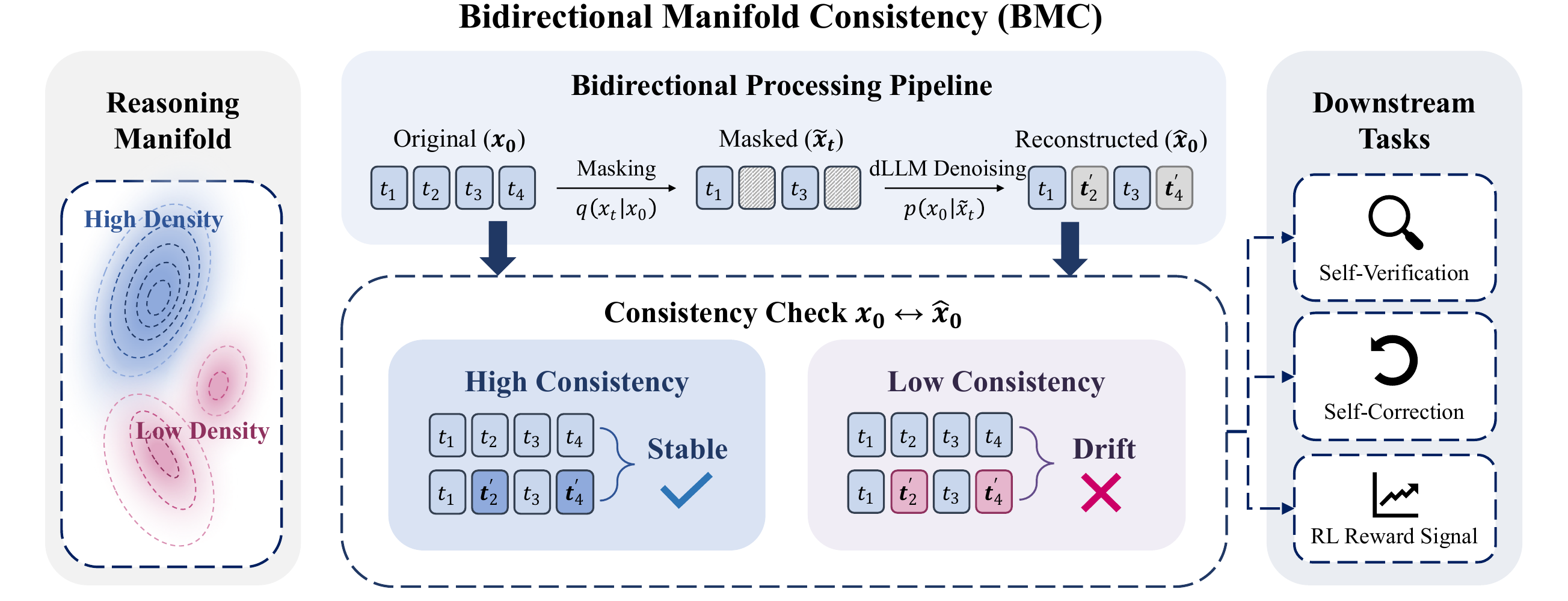}
        \vskip -0.12in
    \caption{
        \textbf{The BMC Framework.} 
        \textbf{Left:} Correct solutions (blue) occupy stable high-density regions 
        on the reasoning manifold; incorrect solutions (purple) lie off-manifold. 
        \textbf{Center:} Bidirectional pipeline: masking $x_0 \to x_t$, reconstruction 
        $x_t \to \hat{x}_0$, consistency check. High consistency indicates stability; 
        drift reveals errors. 
        \textbf{Right:} Downstream applications—verification, correction, and 
        RL alignment.
    }
    \label{fig:bmc_pipeline}
        \vskip -0.1in
\end{figure*}

% 3.2 计划：与likelihood的关系，这里必须要给出theory，auxiliary bound
% 实例化为KL divergence的话bound log P(x_0) （等式和不等式严格推导）=> cross entropy的不足 => 以一种更通用的discrepancy measure来看待BMC，分成连续和离散两个部分来证明
\subsection{Theoretical Connection to Likelihood}
\label{sec:likelihood}

To validate the theoretical correctness of BMC, we first analyze its behavior when $\mathcal{D}$ is instantiated as the Kullback-Leibler (KL) divergence, establishing a formal connection to the Evidence Lower Bound (ELBO).

\begin{proposition}[BMC as Reweighted ELBO Estimator]
\label{prop:elbo}
Let $\mathcal{D}$ in Eq.~\ref{def:bmc} be the KL divergence between the Dirac distribution of the clean data $\delta_{x_0}$ and the model reconstruction $P_\theta\left(\cdot \mid \tilde{x}_t\right)$. The BMC score $\mathcal{R}_{\mathrm{KL}}\left(x_0\right)$ constitutes an estimator of the Reweighted ELBO:
\begin{equation}
\mathcal{R}_{\text{KL}}(x_0) =\frac1T \sum_{t=1}^T \mathbb{E}_{q(\tilde{x}_t|x_0)} \left[ \log p_\theta(x_0 | \tilde{x}_t) \right] + C,
\end{equation}
where $T$ is the total number of diffusion steps and $C$ represents constants independent of $\theta$.
\end{proposition}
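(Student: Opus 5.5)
The plan is to unfold Definition~\ref{def:bmc} with $\mathcal{D} = \mathrm{KL}(\delta_{x_0}\,\|\,P_\theta(\cdot\mid\tilde{x}_t))$ and compute the KL divergence in closed form; the proposition then reduces to a short identity.

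First, I will make explicit that the reconstruction distribution is the $x_0$-parameterized denoiser output, $P_\theta(\cdot\mid\tilde{x}_t)=p_\theta(\cdot\mid\tilde{x}_t)$, viewed as a distribution over full sequences. Since $\delta_{x_0}$ places all its mass on $x_0$, the divergence collapses to a single term:
\begin{equation*}
\mathrm{KL}\bigl(\delta_{x_0}\,\|\,p_\theta(\cdot\mid\tilde{x}_t)\bigr)=\sum_{y}\delta_{x_0}(y)\log\frac{\delta_{x_0}(y)}{p_\theta(y\mid\tilde{x}_t)}=-H(\delta_{x_0})-\log p_\theta(x_0\mid\tilde{x}_t).
\end{equation*}
Here the entropy $H(\delta_{x_0})$ is $0$. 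More generally it does not depend on $\theta$, so I will absorb it into $C$. This also covers the case where one uses a smoothed target distribution instead of an exact Dirac.

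Second, I will take the expectation in Eq.~\ref{eq:bmc}. Because $t\sim\mathcal{U}(1,T)$ is discrete uniform, $\mathbb{E}_t[\cdot]=\frac1T\sum_{t=1}^T[\cdot]$. The inner expectation is over $\tilde{x}_t\sim q(x_t\mid x_0)$, the factorized absorbing marginal from \S\ref{sec:prelim}. Negating the KL and applying linearity of expectation gives exactly $\frac1T\sum_t\mathbb{E}_{q(\tilde{x}_t\mid x_0)}[\log p_\theta(x_0\mid\tilde{x}_t)]+C$.

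Third, I will justify calling this a reweighted ELBO. Under the absorbing process, unmasked positions of $\tilde{x}_t$ are copied deterministically, so $\log p_\theta(x_0\mid\tilde{x}_t)=\sum_{i\in M_t}\log p_\theta(x_0^{(i)}\mid\tilde{x}_t)$. This uses the per-token factorization of the $x_0$-prediction, which is standard for masked denoisers. The resulting sum is the negative of the integrand of $\mathcal{L}_{\text{D3PM}}$ in Eq.~\ref{eq:d3pm_loss}, evaluated at a fixed $x_0$. The ELBO for masked diffusion has the same form but with time-dependent weights $w(t)$, for example $\beta_t/(1-\bar\alpha_t)$ in discrete time, or $1/t$ in the continuous-time limit. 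BMC replaces these with uniform weights $1/T$, and I will argue that this is exactly what makes it a reweighted ELBO.

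Finally, since $x_0$ is fixed, the Monte Carlo average over sampled $(t,\tilde{x}_t)$ used in practice is an unbiased estimator of this quantity. That establishes the ``estimator'' part of the claim.

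The main obstacle is this third step: matching the uniform weighting to a legitimate reweighting of the true ELBO, which comes from the variational bound in \S\ref{sec:prelim}. I would first write the ELBO as $\sum_t w(t)\,\mathbb{E}_q[\cdot]$ with $w(t)\ge0$. I would then invoke the standard reweighted-ELBO framing, in which any positive weighting gives a weighted objective sharing the same optimum at the data distribution. This lets me state the result as an identity with uniform weights rather than a bound. If that is too loose, a fallback is to show that the BMC score is sandwiched between $\min_t w(t)$ and $\max_t w(t)$ multiples of the ELBO terms, so BMC bounds the log-likelihood up to schedule-dependent constants.
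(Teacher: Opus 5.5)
Your proposal is correct and follows essentially the same route as the paper's proof: collapse $\mathrm{KL}(\delta_{x_0}\,\|\,p_\theta(\cdot\mid\tilde x_t))$ to $-\log p_\theta(x_0\mid\tilde x_t)$ plus a $\theta$-independent constant, rewrite $\mathbb{E}_{t\sim\mathcal{U}(1,T)}$ as $\frac1T\sum_{t=1}^T$, factorize over masked tokens, and read the result as the uniformly reweighted counterpart of the $1/t$-weighted absorbing ELBO of \citet{austin2021structured}. Two side remarks need fixing, though neither affects the argument: the discrete-time absorbing-ELBO weight is $\bar\alpha_{t-1}\beta_t/(1-\bar\alpha_t)$ rather than $\beta_t/(1-\bar\alpha_t)$, and a smoothed target would change the cross-entropy term itself (it would no longer equal $-\log p_\theta(x_0\mid\tilde x_t)$), so that case is not covered just by absorbing the entropy into $C$.
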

\vspace{-5pt}
\begin{proof}[Proof Sketch]
The KL divergence reduces to cross-entropy for Dirac distributions. The reweighted ELBO is derived by \citet{austin2021structured}. See Appendix~\ref{app:proof_elbo} for details.
\end{proof}

While Proposition~\ref{prop:elbo} grounds BMC in probability theory via KL divergence, strict likelihood implies lexical rigidity, failing to distinguish paraphrases from logical errors. To reconcile verification with both \textit{decision correctness} and \textit{semantic diversity}, we instantiate the generic measure $\mathcal{D}$ (Definition~\ref{def:bmc}) with generalized metrics over either discrete or continuous space. We now prove that these metrics remain valid proxies for likelihood optimization: discrete metrics maintain consistency with $\mathcal{R}_{\text{KL}}(x_0)$, while continuous metrics provide a relaxation for synonymy.

\begin{proposition}[Consistency with Marginal Reweighted ELBO]
\label{thm:monotonicity}
Let \( \mathcal{I} \subseteq \{1, \dots, L\} \) be the set of indices corresponding to decision-critical tokens in the original sequence \( x_0 \). Define \( \mathcal{D}_f \) as a Csiszár \( f \)-divergence generated by a strictly convex function \( f \) with \( f(1) = 0 \). For each token \( x_0^{(i)} \) in the critical set \( \mathcal{I} \), let \( \hat{x}_0^{(i)}(\tilde{x}_t) \) be the corresponding token in the reconstruction \( \hat{x}_0(\tilde{x}_t) \). Then, the BMC score \( \mathcal{R}_{\mathcal{D}}(x_0) \), defined as the negative expected divergence, is equivalent to maximizing the Marginal Reweighted ELBO of the critical subsequence \( x_0^{(\mathcal{I})} \).
\end{proposition}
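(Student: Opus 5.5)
The plan is to reduce the $f$-divergence score on the critical set to the cross-entropy case already handled by Proposition~\ref{prop:elbo}, and then argue that a strictly monotone reparametrization preserves the maximizer.

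\textbf{Step 1: restrict to the critical tokens.} Take $\mathcal{D}_f$ to act token-wise on the critical indices:
\[
\mathcal{D}_f\bigl[x_0,\hat{x}_0(\tilde{x}_t)\bigr] = \sum_{i\in\mathcal{I}} D_f\bigl(\delta_{x_0^{(i)}} \,\|\, p_\theta(\cdot\mid\tilde{x}_t)^{(i)}\bigr).
\]
Here $p_\theta(\cdot\mid\tilde{x}_t)^{(i)}$ is the per-position marginal of the denoiser. Because the denoiser factorizes across positions, this marginal equals the marginal of the reconstruction of the subsequence $x_0^{(\mathcal{I})}$ with the non-critical positions summed out. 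This is what justifies the word ``Marginal''.

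\textbf{Step 2: closed form of the $f$-divergence against a Dirac.} Fix $i\in\mathcal{I}$ and write $p_i = p_\theta(x_0^{(i)}\mid\tilde{x}_t)$. With the convention $D_f(P\|Q)=\sum_v Q(v)\,f\bigl(P(v)/Q(v)\bigr)$ and $P=\delta_{x_0^{(i)}}$, we get
\[
D_f = p_i\, f(1/p_i) + (1-p_i)\, f(0) =: g(p_i).
\]

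\textbf{Step 3: monotonicity of $g$.} Write $g(p) = p f(1/p) + (1-p) f(0)$. The first term $p f(1/p)$ is the perspective of $f$, so $g$ is convex on $(0,1]$. At $p=1$ we have $g(1)=f(1)=0$. For $p<1$, the divergence between the distinct distributions $\delta$ and $Q$ is strictly positive by strict convexity and Jensen, so $g(p)>0$. A convex function on $(0,1]$ that is positive on $(0,1)$ and vanishes at the right endpoint must be nonincreasing. Strict convexity then upgrades this to strictly decreasing: if $g$ were constant on some interval, it would be affine there, which strict convexity forbids. Hence $g$ is a strictly decreasing function of $p_i$, and $-g(p_i)$ is strictly increasing in $p_i$, exactly as $\log p_i$ is. The case $f(u) = u\log u$ recovers $g(p) = -\log p$, matching Proposition~\ref{prop:elbo}.

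\textbf{Step 4: assemble the score.} Averaging over $t\sim\mathcal{U}(1,T)$ and over $\tilde{x}_t$ gives
\[
\mathcal{R}_{\mathcal{D}} = -\tfrac1T\sum_t \mathbb{E}\sum_{i\in\mathcal{I}} g(p_i),
\]
while the marginal reweighted ELBO of $x_0^{(\mathcal{I})}$ is $\tfrac1T\sum_t\mathbb{E}\sum_{i\in\mathcal{I}}\log p_i$ up to the constant $C$. Both are sums of strictly increasing functions of the same per-token probabilities. Each is therefore maximized, uniquely, exactly when every $p_i\to 1$, that is, when the model perfectly reconstructs the critical subsequence. In that sense maximizing one is equivalent to maximizing the other.

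\textbf{Main obstacle.} The pointwise monotone correspondence does not make the two sums order-equivalent for arbitrary configurations, because $g\circ\exp$ is not affine. Two fixes are available:
\begin{itemize}
\item read ``equivalent'' as sharing the same optimum and the same per-coordinate ascent directions, since $\partial\mathcal{R}/\partial p_i$ and $\partial\,\mathrm{ELBO}/\partial p_i$ have the same sign;
\item bound $\mathcal{R}_{\mathcal{D}}$ above and below by affine functions of the ELBO when $f$ has bounded curvature near $1$.
\end{itemize}
I would state the first formally and note the second as a local (near $p_i\approx 1$) equivalence via a Taylor expansion: $g(p)\approx f''(1)(1-p)^2/\ldots$ versus $-\log p\approx 1-p$. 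A further subtlety is $f(0)=\infty$ for some generators, such as reverse KL. In that case I would assume $p_i>0$, which holds under softmax, and take limits.
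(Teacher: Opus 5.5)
Your Steps 1--4 are correct and follow the paper's own argument. You evaluate the $f$-divergence against the Dirac at $x_0^{(i)}$ to get $g(p)=pf(1/p)+(1-p)f(0)$, show $g$ is strictly decreasing, and use the factorized denoiser to identify $\sum_{i\in\mathcal{I}}\log p_i$ with the marginal log-likelihood of $x_0^{(\mathcal{I})}$. Your monotonicity argument (perspective convexity, $g(1)=0$, $g>0$ on $(0,1)$) is more complete than the paper's bare assertion $\partial\mathcal{D}_f/\partial y\le 0$. That inequality also follows in one line from the supporting-line bound $f(0)\ge f(u)-uf'(u)$ at $u=1/p$. Your reading of ``equivalent'' as a shared maximizer with sign-aligned partial derivatives is exactly what the paper's proof establishes, despite its stronger wording.

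Two of your side remarks are wrong, however.

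\emph{The Taylor expansion.} The expansion near $p=1$ is not quadratic. With $p=1-\varepsilon$ and $f$ twice differentiable at $1$,
\[
g(1-\varepsilon)=\bigl(f(0)+f'(1)\bigr)\varepsilon+\tfrac12 f''(1)\varepsilon^2+O(\varepsilon^3).
\]
Strict convexity gives $f(0)>f(1)-f'(1)=-f'(1)$, so $c_f:=f(0)+f'(1)>0$. For example, $g(p)=(1-p)/p$ for $f(u)=(u-1)^2$, and $g(p)=2-2\sqrt{p}$ for squared Hellinger. The local approximation $D_f\approx\tfrac12 f''(1)\chi^2$ does not apply here, because it needs $P/Q\approx 1$ everywhere, while $P/Q=0$ on every off-target token. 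As you wrote it, $g\sim(1-p)^2$ against $-\log p\sim 1-p$ would give no local affine equivalence. With the correct expansion it works: $g(p)=c_f(-\log p)+O((1-p)^2)$. So near $p_i\approx 1$ the score is, to first order, $c_f$ times the marginal reweighted ELBO.

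\emph{Generators with $f(0)=\infty$.} Assuming $p_i>0$ does not rescue generators such as $f(u)=-\log u$. The infinite contribution is $(1-p_i)f(0)$, which comes from off-target mass. The condition $p_i>0$ only controls the other term, $pf(1/p)$. Under softmax $p_i<1$ always, so $g\equiv+\infty$ and the score is identically $-\infty$, and no limiting argument recovers a monotone correspondence. You need the explicit hypothesis $f(0)<\infty$. Your Step 3, which treats $(1-p)f(0)$ as affine, and the paper's closed form both already use it implicitly.
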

\vspace{-6pt}
\begin{proof}[Proof Sketch]
Using strict convexity of function $f$. See Appendix~\ref{app:proof_monotonicity} for full derivation.
\end{proof}

\begin{proposition}[Semantic Continuity of Likelihood]
\label{thm:lipschitz}
Let \( \Phi: \mathcal{V}^L \to \mathbb{R}^d \) be a sequence embedding function that maps token sequences to a continuous embedding space, and let \( d_{\Phi}(x, y) \) be a dissimilarity measure induced by \( \Phi \) between sequences \( x \) and \( y \) in this space. Assume that the conditional log-likelihood \( \log p_\theta(x \mid \tilde{x}_t) \) is locally \( K \)-Lipschitz continuous with respect to \( d_{\Phi}(x, y) \). 
For any original sequence \( x_0 \) and its reconstruction \( \hat{x}_0 \) within a local neighborhood, if their semantic discrepancy is bounded by \( d_{\Phi}(x_0, \hat{x}_0) \leq \epsilon \), then the likelihood ratio is constrained as follows:
\begin{equation}
e^{-K\epsilon} \le \tfrac{p_{\theta}(\hat{x}_0 \mid \tilde{x}_t)}{p_{\theta}(x_0 \mid \tilde{x}_t)} \le e^{K\epsilon}.
\end{equation}
\end{proposition}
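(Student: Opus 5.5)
The proof rests on a single use of the Lipschitz hypothesis, followed by exponentiation. Fix the corrupted state $\tilde{x}_t$ and write $g(x) := \log p_\theta(x \mid \tilde{x}_t)$ for $x$ in the local neighborhood of $x_0$ where the assumption holds. The hypothesis states that $|g(x) - g(y)| \le K\, d_{\Phi}(x, y)$ for all $x, y$ in that neighborhood. The reconstruction $\hat{x}_0$ lies in this neighborhood by assumption. Applying the bound to the pair $(x_0, \hat{x}_0)$ and using $d_{\Phi}(x_0, \hat{x}_0) \le \epsilon$ gives
\begin{equation*}
-K\epsilon \;\le\; \log p_\theta(\hat{x}_0 \mid \tilde{x}_t) - \log p_\theta(x_0 \mid \tilde{x}_t) \;\le\; K\epsilon .
\end{equation*}

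Since $\exp$ is strictly increasing, exponentiating each side turns the difference of logarithms into the ratio $p_\theta(\hat{x}_0 \mid \tilde{x}_t)/p_\theta(x_0 \mid \tilde{x}_t)$. This yields exactly the two-sided bound $e^{-K\epsilon} \le \cdot \le e^{K\epsilon}$ in the statement.

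Some bookkeeping is needed for this to be well defined, and here I would be explicit. Both conditional probabilities must be strictly positive, so that the logarithms are finite and the ratio makes sense. This holds for the $x_0$-parameterized denoiser, whose softmax outputs give every sequence consistent with the unmasked positions of $\tilde{x}_t$ positive mass. It is also implicit in the assumption that $g$ is real-valued and Lipschitz. Only symmetry in the sense $|g(x)-g(y)|$ is used, so $d_{\Phi}$ need not be a true metric, and any dissimilarity induced by $\Phi$ (for example an $\ell_2$ or cosine-type distance) works.

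I expect no real obstacle: the argument is a direct consequence of the assumption, and the only delicate point is stating the assumption carefully. The Lipschitz condition is imposed on a function of a discrete variable through the pseudo-distance $d_{\Phi}$, so it is a modeling assumption rather than something to be derived. The proof should state clearly that it holds uniformly over the neighborhood containing both $x_0$ and $\hat{x}_0$, for the fixed $\tilde{x}_t$.

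Finally, I would add a remark that the bound is useful for BMC. Because the inequality holds pointwise in $\tilde{x}_t$ and $t$, averaging $\log p_\theta(\hat{x}_0 \mid \tilde{x}_t)$ in place of $\log p_\theta(x_0 \mid \tilde{x}_t)$ changes the reweighted ELBO of Proposition~\ref{prop:elbo} by at most $K\epsilon$. A small embedding-space discrepancy therefore certifies near-equal likelihood, which is how paraphrases are tolerated.
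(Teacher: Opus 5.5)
Your proposal is correct and matches the paper's proof: apply the local Lipschitz bound to the pair $(x_0,\hat{x}_0)$, which gives $|\log p_\theta(\hat{x}_0\mid\tilde{x}_t)-\log p_\theta(x_0\mid\tilde{x}_t)|\le K\epsilon$, then exponentiate using the monotonicity of $\exp$. The paper writes the bound through the embeddings $z=\Phi(\hat{x}_0)$, $z^*=\Phi(x_0)$ with $\|z-z^*\|\le\epsilon$ rather than through $d_\Phi$ directly. Your positivity bookkeeping and the remark on the averaged ELBO are harmless additions that the paper does not include.
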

\vspace{-8pt}
\begin{proof}[Proof Sketch]
Using local $K$-Lipschitz continuity of the log-likelihood. Full derivation in Appendix~\ref{app:proof_lipschitz}.
\end{proof}
\vspace{-5pt}

These propositions provide theoretical guarantees for BMC, ensuring the robustness of the consistency score under our general definition. In the token space, we can use a generalized $f$-divergence to measure the matching of critical tokens, such as verifying key reasoning steps or exact conclusions, which corresponds to maximizing the Reweighted ELBO and thus log-likelihood. In the embedding space, we can relax semantic constraints by using metrics like L2 norm or cosine similarity to measure the similarity between the original and reconstructed sequence embeddings, enabling semantic flexibility (e.g., paraphrasing) while maintaining close alignment with the original log-likelihood.

% 3.3 计划：geometirc interpretation，为什么高概率意味着更加正确呢？结合 manifold hypothesis
\subsection{Geometric Interpretation}
\label{sec:geometric}

While the probabilistic perspective aligns BMC with the likelihood of a sequence, it does not fully address the calibration gap, where fluent yet factually incorrect outputs receive high probability. To address this, we analyze the geometric stability of the generated sequence by examining its reconstruction error in the continuous latent space.

Let $\mathcal{Z} \subset \mathbb{R}^{L \times d}$ be the continuous embedding space of generated sequences. We define the \textit{Denoising Operator} $\mathcal{T}_\theta: \mathcal{Z} \to \mathcal{Z}$ as the expected one-step reconstruction:
\begin{equation}
    \mathcal{T}_\theta(z) := \mathbb{E}_{\tilde{z} \sim q(\cdot|z)} \left[\mathbb{E}_{z' \sim p_\theta(\cdot|\tilde{z})}[z']\right].
\end{equation}
$\mathcal{T}_\theta$ captures the deterministic mean field flow of the diffusion dynamics. We posit that the manifold of valid solutions, $\mathcal{M} \subset \mathcal{Z}$, acts as a set of stable fixed-point attractors (where $\mathcal{T}_\theta(z^*) = z^*$ for any $z^* \in \mathcal{M}$). Accordingly, we model the trained denoiser $\mathcal{T}_\theta$ as a local contraction mapping that projects off-manifold states toward validity.

\begin{proposition}[Manifold Distance Upper Bound]
\label{thm:error_bound}
Let $\mathcal{Z}$ be equipped with a metric induced by the embedding norm. Assume $\mathcal{T}_\theta$ is a $\kappa$-contraction mapping locally around $\mathcal{M}$ with rate $0 \le \kappa < 1$. For any generated sequence $z_0$, let $z^* \in \mathcal{M}$ be the closest valid solution. The geometric error is upper-bounded by the reconstruction residual:
\begin{equation}
    \|z_0 - z^*\| \le \frac{1}{1 - \kappa} \|z_0 - \mathcal{T}_\theta(z_0)\|.
\end{equation}
\end{proposition}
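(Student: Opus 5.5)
The argument is the standard a posteriori error estimate from the Banach fixed-point theorem. The two ingredients are the fixed-point property $\mathcal{T}_\theta(z^*) = z^*$, which the paper posits for every $z^* \in \mathcal{M}$, and the contraction inequality $\|\mathcal{T}_\theta(z) - \mathcal{T}_\theta(z')\| \le \kappa \|z - z'\|$ on the neighborhood of $\mathcal{M}$ where the assumption holds.

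We would proceed as follows.

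First, we insert $\mathcal{T}_\theta(z_0)$ into the geometric error and apply the triangle inequality:
\begin{equation*}
\|z_0 - z^*\| \le \|z_0 - \mathcal{T}_\theta(z_0)\| + \|\mathcal{T}_\theta(z_0) - z^*\|.
\end{equation*}

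Second, since $z^*$ is a fixed point, we rewrite the last term as $\|\mathcal{T}_\theta(z_0) - \mathcal{T}_\theta(z^*)\|$. Local contraction then bounds it by $\kappa \|z_0 - z^*\|$.

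Third, we combine the two steps to get $(1-\kappa)\|z_0 - z^*\| \le \|z_0 - \mathcal{T}_\theta(z_0)\|$. Because $0 \le \kappa < 1$, the factor $1-\kappa$ is strictly positive, so dividing by it gives the claimed bound. Note that the argument uses $z^*$ only as some fixed point in $\mathcal{M}$; the fact that it is the closest one plays no role. The bound therefore holds for any such $z^*$, and the closest one is simply the natural choice for interpreting the left-hand side as a distance to $\mathcal{M}$.

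The main obstacle is the locality of the hypothesis, since the contraction is only assumed ``locally around $\mathcal{M}$.'' We must ensure that both $z_0$ and $z^*$ lie in a common region $U \supseteq \mathcal{M}$ on which the $\kappa$-Lipschitz bound holds. We would state this explicitly: $z_0$ is taken in a neighborhood $U$ of $\mathcal{M}$ on which $\mathcal{T}_\theta$ is $\kappa$-contractive, and $z^* \in \mathcal{M} \subseteq U$.

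A subtler point is that a contraction has at most one fixed point in any region on which it contracts. If $\mathcal{M}$ contains several points, the contraction property therefore cannot hold on a single connected region containing all of them. We would resolve this by reading the assumption as contraction on a basin $U_{z^*}$ of each attractor $z^*$, and applying the argument within the basin containing $z_0$. In that basin $z^*$ is the unique fixed point, and it is then naturally the relevant, and closest, valid solution.

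No earlier result of the paper is needed beyond these modeling assumptions. The computation is three lines once the domain of validity has been made precise.
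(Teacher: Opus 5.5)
Your proof is correct and is the same argument as the paper's: the triangle inequality through $\mathcal{T}_\theta(z_0)$, the fixed-point identity $\mathcal{T}_\theta(z^*) = z^*$, the contraction bound, and rearranging using $1-\kappa>0$. Your additional remarks are valid refinements that the paper's appendix proof leaves implicit. First, $z_0$ and $z^*$ must both lie in the contraction region. Second, a contraction has at most one fixed point in any region on which it contracts, so the hypothesis has to be read per basin.
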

\vspace{-10pt}
\begin{proof}[Proof Sketch]
Applying the triangle inequality and the fixed-point property. Detailed derivation in Appendix~\ref{app:proof_geometric}.
\end{proof}

%新加的

% Theorem~\ref{thm:error_bound} requires only \textit{local} contraction 
% around $\mathcal{M}$, not global contractivity across the entire sequence 
% space. Empirical validation of this geometric hypothesis is provided in 
% Appendix~\ref{app:geometric_validation}.

% The term $\|z_0 - \mathcal{T}_\theta(z_0)\|$ corresponds to the continuous approximation of the reconstruction drift captured by BMC. Theorem \ref{thm:error_bound} implies that a low drift mathematically guarantees proximity to the true reasoning manifold $\mathcal{M}$. Conversely, significant drift reveals that the solution is a transient state far from the attractor. This justifies the rejection of hallucinations based on dynamic instability rather than static likelihood.

% todo list
% 1. method v3, 1.22
% 2. experiment, 优化表格 1.23
% 3. experiment, 修改内容 1.23
% 4. proof read证明，notation 1.24-1.25
% 5. 全文图、公式校对 1.26
% 6. abstract, intro修改 1.26 => 定初稿

\section{Method}
\label{sec:bmc}

% Section~\ref{sec:theory} defined the theoretical consistency score $\mathcal{R}_{\mathcal{D}}(x_0)$. We now present the BMC algorithm that instantiates this abstraction into a practical framework. As illustrated in Figure~\ref{fig:bmc_pipeline}, BMC exploits the reasoning manifold structure through a bidirectional pipeline: forward masking perturbs the sequence, backward reconstruction attempts recovery, and consistency measurement distinguishes stable on-manifold solutions from drifting off-manifold errors. This design enables three applications—error detection, self-correction, and RL alignment.

As the exact computation of the theoretical consistency score $\mathcal{R}_{\mathcal{D}}(x_0)$ formulated in Section~\ref{sec:theory} is computationally intractable, we propose an unsupervised, training-free BMC estimator that efficiently leverages the intrinsic bidirectional dynamics of dLLMs. As illustrated in Figure~\ref{fig:bmc_pipeline}, we first implement a perturbation-recovery cycle and then operationalize the abstract measure $\mathcal{D}$ using concrete similarity metrics. This yields a practical stability signal for error diagnosis, inference guidance, and alignment.

\subsection{The BMC Estimator}
\label{subsec:bmc_estimator}

% We employ a Monte Carlo estimator that simulates the bidirectional dynamics. Given a generated sequence $x_0$, the process involves a single perturb-and-reconstruct cycle followed by multi-dimensional scoring.

The estimation begins with the \textit{Forward Perturbation}. We utilize the intrinsic forward transition of the discrete diffusion model to project the complete generated sequence $x_0$ onto a partially observable state $\tilde{x}_t$. Let $m \in \{0, 1\}^L$ be a binary mask vector sampled from a Bernoulli distribution with parameter $1-\gamma$, where $\gamma$ is the masking ratio. The perturbed state $\tilde{x}_t$ is obtained by replacing tokens with the absorbing token \texttt{[MASK]} where $m_i=0$:
\begin{equation}
\label{eq:bmc_masking}
    \tilde{x}_t^{(i)} = m_i \, x_0^{(i)} + (1 - m_i) \, \texttt{[MASK]}.
\end{equation}
This operation forces the model to reproduce the original trajectory from a fragmented state.

Subsequently, we perform \textit{Backward Reconstruction} using the pre-trained denoiser $p_\theta$. Starting from the absorbing state $\tilde{x}_t$, we execute $K$ iterative denoising steps with a linear schedule, to generate a reconstructed trajectory $\hat{x}_0 \sim p_\theta(\cdot | \tilde{x}_t; K)$. A critical advantage of this estimator is its computational efficiency. Unlike the generative process which requires the full trajectory length $T$, our verification step performs only a truncated reconstruction with $K \ll T$ (e.g., $K=16$ vs. $T=1024$). This allows BMC to probe the manifold stability of a solution with a fraction of the cost required for resampling.

While the perturb-and-reconstruct protocol itself is general, the dissimilarity measure $\mathcal{D}(x_0, \hat{x}_0)$ must be instantiated to match the decision-critical structure of the target domain, and alternative instantiations are discussed in Appendix~\ref{sec:code_generation}. Specifically, we leverage the fact that maximizing the negative dissimilarity is equivalent to maximizing the similarity between the original and reconstructed states.  To provide a comprehensive measurement of stability, we compute a composite score $S_{\text{BMC}}$ derived from six metrics widely adopted for evaluating generation fidelity. Let $M = \{i \mid m_i = 0\}$ be the set of masked indices. The components are defined as follows:

% To instantiate the measure $\mathcal{D}(x_0, \hat{x}_0)$, we leverage the fact that maximizing the negative dissimilarity is equivalent to maximizing the similarity between the original and reconstructed states. To provide a comprehensive measurement of stability, we compute a composite score $S_{\text{BMC}}$ derived from six metrics widely adopted for evaluating generation fidelity. Let $M = \{i \mid m_i = 0\}$ be the set of masked indices. The components are defined as follows:

\textit{Token Accuracy} ($s_{\text{tok}}$) monitors local convergence by measuring the exact reconstruction rate of masked tokens:
\vspace{-5pt}
\begin{equation}
    s_{\text{tok}} = \tfrac{1}{|M|} \sum\nolimits_{i \in M} \mathbb{I}(x_0^{(i)} = \hat{x}_0^{(i)}).
\end{equation}
\vspace{-20pt}

\textit{Semantic Similarity} ($s_{\text{sem}}$) accounts for valid paraphrasing by computing the cosine similarity between sentence embeddings $\phi(x_0)$ and $\phi(\hat{x}_0)$:
\vspace{-5pt}
\begin{equation}
    s_{\text{sem}} = \tfrac{\phi(x_0)^\top \phi(\hat{x}_0)}{\|\phi(x_0)\| \|\phi(\hat{x}_0)\|}.
\end{equation}
\vspace{-20pt}

\textit{Number Retention} ($s_{\text{num}}$) captures the stability of critical logic nodes in mathematical reasoning. Let $\mathcal{N}(x_0)$ be the multiset of numbers extracted from sequence $x_0$. We define stability as the recall rate:
\vspace{-5pt}
\begin{equation}
    s_{\text{num}} = \tfrac{|\mathcal{N}(x_0) \cap \mathcal{N}(\hat{x}_0)|}{|\mathcal{N}(x_0)| + \epsilon}.
\end{equation}
\vspace{-20pt}

\textit{Final Answer Match} ($s_{\text{ans}}$) captures the terminal convergence of the reasoning trajectory. Using a task-specific extractor $E(\cdot)$, we define a binary indicator:
\vspace{-5pt}
\begin{equation}
    s_{\text{ans}} = \mathbb{I}(E(x_0) \equiv E(\hat{x}_0)).
\end{equation}
\vspace{-20pt}

\textit{Auxiliary Metrics}: We additionally track \textit{Character Similarity} ($s_{\text{char}}$), the normalized Levenshtein ratio, to ensure morphological robustness; and \textit{Intrinsic Confidence} ($s_{\text{conf}}$), the average predicted probability during reconstruction, to verify if the trajectory lies in a high-density region.

The final BMC score is a weighted linear combination $S_{\text{BMC}}(x_0) = \sum_{k} \lambda_k s_k$, where weights $\lambda_k$ are aggregation coefficients that balance different consistency aspects.

%问题： Conflict between "Unsupervised" Claims and Supervised Weight Tuning.修改
% optimized on a held-out validation set for each dataset.

\begin{algorithm}[t]
\small
\caption{Manifold-Guided Rejection Sampling}
\label{alg:MGRS}
\begin{algorithmic}[1]
\REQUIRE Query $q$, Model $p_\theta$, Threshold $\tau$, Budget $N_{\max}$, Mask Rate $\gamma$, Steps $K$
\ENSURE Reliable solution sequence $x^*$
\STATE $x_{\text{best}} \leftarrow \texttt{None}, \quad S_{\text{best}} \leftarrow -1$
\FOR{$n = 1$ to $N_{\max}$}
    \STATE \textbf{Generate:} $x_0 \sim p_\theta(\cdot|q)$
    \STATE \textbf{Perturb:} $\tilde{x}_t \sim q(x_t | x_0)$ with rate $\gamma$ \COMMENT{Eq.~\eqref{eq:bmc_masking}}
    \STATE \textbf{Truncated Reconstruct:} $\hat{x}_0 \sim p_\theta(\cdot|\tilde{x}_t; K)$
    \STATE \textbf{Verify:} $S \leftarrow S_{\text{BMC}}(x_0, \hat{x}_0)$
    
    \IF{$S > \tau$}
        \STATE \textbf{return} $x_0$ \COMMENT{Early exit: Stable solution found}
    \ENDIF
    
    \IF{$S > S_{\text{best}}$}
        \STATE $S_{\text{best}} \leftarrow S, \quad x_{\text{best}} \leftarrow x_0$
    \ENDIF
\ENDFOR
\STATE \textbf{return} $x_{\text{best}}$ \COMMENT{Fallback to most stable candidate}
\end{algorithmic}
\vskip -0.05in
\end{algorithm}

\subsection{Manifold-Guided Inference}
\label{sec:inference}

The BMC estimator $\mathcal{S}_{\text{BMC}}(x_0)$ serves as a versatile geometric signal for inference, quantifying the structural reliability of a generated sequence. In the context of complex reasoning, the generation landscape is dominated by plausible but fallacious hallucinations, while true solution paths function as distinct \textit{stable attractors} within the learned distribution. Standard decoding methods often struggle to distinguish these rigorous paths from locally fluent but logically drifting errors, leading to confident hallucinations.

To address this, we propose \textit{Manifold-Guided Rejection Sampling (MGRS)}, detailed in Algorithm~\ref{alg:MGRS}, which transforms generation into an adaptive stability search. Using $\mathcal{S}_{\text{BMC}}$ to quantify geometric stability, we establish a dynamic acceptance criterion: candidates exceeding a threshold ($\mathcal{S}_{\text{BMC}} > \tau$) are identified as stable solutions residing on the learned manifold. These are accepted immediately, enabling the efficient resolution of simpler queries. Conversely, low-stability outputs trigger an iterative resampling loop, directing computational resources toward exploring alternative trajectories until a geometrically consistent solution emerges. This approach effectively focuses compute on hard queries, leveraging intrinsic stability to distinguish robust trajectory from unstable drift.

\subsection{Geometric Alignment via Dense Rewards}
\label{sec:alignment}

While inference-time selection optimizes existing trajectories, we seek to internalize the model manifold by shaping the policy $\pi_\theta$ to favor stable reasoning paths. We achieve this by incorporating the BMC score as a dense RL reward. To mitigate the risk of reinforcing consistent but erroneous chains, we propose a gated reward function that conditions geometric stability on logical validity:
\begin{equation}
    r(x_0) = \mathbb{I}(y_{\text{pred}} = y^*) \cdot \left[ r_{\text{base}} + \alpha_t \cdot S_{\text{BMC}}(x_0) \right],
    \label{eq:reward}
\end{equation}
where $\mathbb{I}(\cdot)$ is the correctness indicator for the final answer and $r_{\text{base}}$ is a constant completion reward.

This multiplicative formulation establishes a strict hierarchy where validity serves as a prerequisite for stability assessment. Consequently, incorrect responses receive zero reward regardless of their internal consistency, preventing the optimization of high-confidence errors. For correct responses, the term $\alpha_t \cdot S_{\text{BMC}}$ provides a fine-grained gradient that distinguishes between unstable spurious success and robust reasoning anchored on the learned manifold. 
% This mechanism directs the probability mass toward the structurally stable regions of the solution space $\mathcal{M}$.

To balance solution discovery and trajectory refinement, we implement a progressive curriculum for $\alpha_t$. Since strong early constraints can prematurely restrict exploration, we adopt a linear annealing schedule: $\alpha_t = \alpha_{\min} + (\alpha_{\max} - \alpha_{\min}) \cdot \frac{t}{T}$, where $t$ is the current step and $T$ is the total budget. This prioritizes answer correctness in the early stages ($\alpha_t \approx \alpha_{\min}$) while progressively emphasizing intrinsic geometric stability as training converges ($\alpha_t \to \alpha_{\max}$).

For policy optimization, we adopt the Sandwiched Policy Gradient (SPG) framework \citep{wang2025spg}. Unlike standard diffu-GRPO which relies on biased ELBO approximations for negative advantages \citep{zhao2025d1}, SPG leverages sandwiched evidence bounds to accurately estimate gradients for intractable dLLM likelihoods.

\begin{table*}[t]
\centering
\small
\tabcolsep=0.24cm
\renewcommand\arraystretch{0.95}
\caption{\textbf{Unsupervised Error Diagnosis Performance.} We report the AUROC and AUPR scores for LLaDA-8B-Instruct~\cite{nie2025large} and Dream-v0-Instruct-7B~\cite{ye2025dream} across four reasoning benchmarks.}
\label{tab:diagnosis_main}
\begin{tabular}{llcccccccc}
\toprule
\multirow{2}{*}{\textbf{Model}} & \multirow{2}{*}{\textbf{Method}} 
& \multicolumn{2}{c}{\textbf{GSM8K}} 
& \multicolumn{2}{c}{\textbf{MATH}} 
& \multicolumn{2}{c}{\textbf{ARC-C}} 
& \multicolumn{2}{c}{\textbf{GPQA}} \\
\cmidrule(lr){3-4} 
\cmidrule(lr){5-6} 
\cmidrule(lr){7-8} 
\cmidrule(lr){9-10}
& 
& AUROC$\uparrow$ & AUPR$\uparrow$ 
& AUROC$\uparrow$ & AUPR$\uparrow$ 
& AUROC$\uparrow$ & AUPR$\uparrow$ 
& AUROC$\uparrow$ & AUPR$\uparrow$ \\
\midrule
\multirow{4}{*}{LLaDA}
& Model Confidence 
& 0.753 & 0.879 
& 0.713 & 0.430 
& 0.550 & 0.824 
& 0.482 & 0.256 \\
& Self-Consistency 
& \underline{0.872} & \underline{0.929} 
& \underline{0.803} & \underline{0.546} 
& \underline{0.735} & \underline{0.895} 
& \underline{0.539} & \underline{0.281} \\
& Self-Evaluation    
& 0.549 & 0.737 
& 0.558 & 0.266 
& 0.546 & 0.821 
& 0.529 & 0.257 \\
& \textbf{BMC (Ours)} 
& \textbf{0.893} & \textbf{0.943} 
& \textbf{0.820} & \textbf{0.613} 
& \textbf{0.777} & \textbf{0.918} 
& \textbf{0.678} & \textbf{0.375} \\
\midrule
\multirow{4}{*}{Dream}
& Model Confidence 
& 0.641 & 0.468 
& 0.522 & 0.228 
& 0.463 & 0.855 
& 0.468 & 0.290 \\
& Self-Consistency 
& \underline{0.684} & \underline{0.498} 
& \underline{0.675} & \underline{0.408} 
& \underline{0.708} & \underline{0.922} 
& 0.527 & 0.313 \\
& Self-Evaluation    
& 0.565 & 0.401 
& 0.525 & 0.245 
& 0.570 & 0.886 
& \underline{0.539} & \underline{0.332} \\
& \textbf{BMC (Ours)} 
& \textbf{0.898} & \textbf{0.787} 
& \textbf{0.825} & \textbf{0.660} 
& \textbf{0.804} & \textbf{0.951} 
& \textbf{0.605} & \textbf{0.329} \\
\bottomrule
\end{tabular}
\vskip -0.13in
\end{table*}

\begin{table*}[t]
\centering
\small
\tabcolsep=0.4cm
\renewcommand\arraystretch{0.93}
% \caption{\textbf{Adaptive Self-Correction Performance.} We compare the MGRS dynamic sampling strategy against fixed-budget baselines ($N=3$). ``Eff'' denotes Sample Efficiency, calculated as the accuracy gain per additional sample.}
\caption{\textbf{Adaptive Self-Correction Performance.} We compare the MGRS dynamic sampling strategy against fixed-budget baselines ($N=3$). ``Eff'' denotes Sample Efficiency: $\text{Eff} = (\text{Acc} - \text{Acc}_{\text{std}}) / (N_{\text{avg}} - 1)$.}

% \caption{\textbf{Adaptive Self-Correction Performance.} 
% Sample efficiency measures accuracy gain per additional sample beyond 
% baseline. MGRS achieves high accuracy with minimal sampling by 
% adaptively stopping when high-BMC solutions are found ($\tau=0.75$), 
% delivering superior efficiency compared to fixed-$N=3$ baselines.}
\label{tab:sampling_comparison}
\begin{tabular}{llcccccccc}
\toprule
\multirow{2}{*}{\textbf{Model}} & \multirow{2}{*}{\textbf{Method}}  
& \multicolumn{2}{c}{\textbf{GSM8K}} 
& \multicolumn{2}{c}{\textbf{MATH}} 
& \multicolumn{2}{c}{\textbf{ARC-C}} 
& \multicolumn{2}{c}{\textbf{GPQA}} \\
\cmidrule(lr){3-4} 
\cmidrule(lr){5-6} 
\cmidrule(lr){7-8} 
\cmidrule(lr){9-10}
& 
& Acc$\uparrow$ & Eff$\uparrow$ 
& Acc$\uparrow$ & Eff$\uparrow$ 
& Acc$\uparrow$ & Eff$\uparrow$ 
& Acc$\uparrow$ & Eff$\uparrow$ \\
\midrule
\multirow{5}{*}{LLaDA}
& Standard Sampling 
& 70.5 & - 
& 24.4 & - 
& 83.2 & - 
& 24.8 & - \\
& Self-Consistency
& 74.3 & 1.86 
& 24.2 & -0.10 
& 86.1 & 1.45 
& 25.0 & 0.11 \\
& Best-of-N (Confidence)
& 70.7 & 0.08 
& 23.8 & -0.30 
& 83.3 & 0.04 
& \textbf{27.7} & \textbf{1.45} \\
& Best-of-N (BMC)
& \underline{77.8} & \underline{3.64} 
& \underline{26.0} & \textbf{0.80} 
& \underline{86.3} & \underline{1.54} 
& 25.0 & 0.11 \\
& \textbf{MGRS (Ours)} 
& \textbf{79.5} & \textbf{3.98} 
& \textbf{27.6} & \underline{0.66}
& \textbf{87.2} & \textbf{1.85} 
& \underline{26.1} & \underline{0.49} \\
\midrule
\multirow{5}{*}{Dream}
& Standard Sampling  
& 72.0 & - 
& 22.4 & - 
& 74.7 & - 
& 22.3 & - \\
& Self-Consistency
& 72.4 & 0.19 
& 20.2 & -1.10 
& \underline{80.6} & \underline{2.91} 
& \underline{29.5} & \underline{3.57} \\
& Best-of-N (Confidence)
& 72.5 & 0.23 
& 20.2 & -1.10 
& 80.4 & 2.82 
& 29.5 & 3.57 \\
& Best-of-N (BMC)
& \underline{72.5} & \underline{0.23} 
& \underline{20.2} & \underline{-1.10} 
& 80.5 & 2.86 
& \textbf{29.7} & \textbf{3.68} \\
& \textbf{MGRS (Ours)} 
& \textbf{72.9} & \textbf{0.69} 
& \textbf{22.6} & \textbf{0.05} 
& \textbf{81.0} & \textbf{3.21} 
& 27.9 & 1.20 \\
\bottomrule
\end{tabular}
\vskip -0.13in
\end{table*}

\section{Experiments}
\label{sec:experiments}

We evaluate BMC across the reasoning lifecycle via three progressive objectives: (1) Unsupervised Error Diagnosis, assessing the estimator's discriminative power in identifying erroneous trajectories; (2) Adaptive Self-Correction, leveraging BMC for manifold-guided resampling to enhance inference-time performance and sample efficiency; and (3) Geometric Alignment, employing BMC as a dense reward signal to guide RL for policy refinement.

% We evaluate BMC across three critical dimensions: unsupervised error diagnosis, inference-time guidance, and self-supervised alignment. Our goal is to demonstrate that geometric stability serves as a robust proxy for logical correctness across varying levels of reasoning complexity.

\subsection{Experimental Setup}
\label{sec:setup}

\noindent\textbf{Models and Datasets.} We implement BMC on two state-of-the-art dLLMs: LLaDA-8B-Instruct~\citep{nie2025large} and Dream-v0-Instruct-7B~\citep{ye2025dream}. To benchmark reasoning performance, we utilize four datasets with varying complexity: GSM8K~\citep{cobbe2021training} for grade-school math, MATH~\citep{hendrycks2021measuring} for high-school competition problems, ARC-Challenge~\citep{clark2018think} for knowledge-intensive scientific reasoning, and GPQA~\citep{rein2024gpqa} for expert-level science questions.

% \textbf{Baselines and Metrics.} We compare BMC against diverse verification paradigms. For error detection, we evaluate Model Confidence \citep{kadavath2022language}, Self-Consistency ($N=10$)~\citep{wang2022self}, and LLM Self-Evaluation \citep{xiong2023can}. For self-correction and alignment, baselines include Standard Sampling \citep{nie2025large,ye2025dream}, Best-of-$N$ selection \citep{stiennon2020learning}, and Outcome-Supervised RL \citep{}. Following standard protocols, we frame error diagnosis as binary classification. Discrimination is measured by AUROC \citep{} and AUPR \citep{}. To assess calibration, we employ Expected Calibration Error (ECE) \citep{} and Brier Score \citep{}. For inference tasks, we report Pass@1 Accuracy and Sample Efficiency \citep{}, defined as the accuracy gain per additional sample.

\textbf{Baselines and Metrics.} For error diagnosis, we evaluate Model Confidence~\citep{kadavath2022language} and Self-Evaluation~\citep{madaan2023self}, using AUROC~\citep{bradley1997use} and AUPR~\citep{davis2006relationship} to measure discrimination. For self-correction, we benchmark against Standard Sampling, Best-of-$N$~\citep{stiennon2020learning}, and Self-Consistency~\citep{wang2022self}, reporting Pass@1 Accuracy and Sample Efficiency, which is defined as the accuracy gain per additional sample. For alignment, we compare with SFT~\citep{zhao2025d1} and Outcome RL~\citep{wang2025spg}. See implementation 
details in Appendix~\ref{app:baseline_details}.

% \textbf{Implementation Details.} For the BMC estimator, we adopt masking ratio $\gamma=0.9$ and $K=16$ denoising steps. Semantic similarity employs all-MiniLM-L6-v2~\citep{reimers2019sentence} embeddings; intrinsic confidence averages denoiser probabilities at masked positions. For MGRS (§4.2), we set threshold $\tau = 0.75$ and maximum budget $N_{\max} = 10$. For RL alignment (§4.3), we use $r_{\text{base}} = 1.5$, $\alpha_{\min} = 0.5$, $\alpha_{\max} = 1.0$, learning rate $10^{-6}$, and 8 generations per prompt. Training is conducted for 4,000-6,000 iterations depending on dataset complexity. Detailed hyperparameter selection rationale is provided in Appendix~\ref{app:application_hyperparameters}.

\textbf{Implementation Details.} BMC uses masking ratio $\gamma=0.9$ and $K=16$. Semantic similarity employs all-MiniLM-L6-v2~\citep{reimers2019sentence}; intrinsic confidence averages masked probabilities. MGRS sets $\tau = 0.75$ and budget $N_{\max} = 10$. RL alignment uses $r_{\text{base}} = 1.5, \alpha \in [0.5, 1.0]$. Full hyperparameters in Appendix~\ref{app:application_hyperparameters}.

% , learning rate $10^{-6}$, and 8 generations/prompt over 4k--6k iterations 放附录

% For the BMC estimator, we adopt a masking ratio $\gamma=0.9$ and perform $K=16$ truncated denoising steps for reconstruction. These hyperparameters help balance verification fidelity with computational overhead. Semantic similarity uses sentence-transformers/all-MiniLM-L6-v2~\citep{reimers2019sentence} embeddings, and intrinsic confidence averages the predicted probabilities at masked positions. In the alignment phase, the weight $\alpha_t$ is linearly annealed from $0.5$ to $1.0$ across training iterations.  

\subsection{Discriminative Performance in Error Diagnosis}
\label{sec:diagnosis_results}

We assess the discriminative capacity of BMC to distinguish erroneous trajectories from correct solutions without ground-truth. Table~\ref{tab:diagnosis_main} reports the AUROC and AUPR scores across four benchmarks of increasing complexity.

Conventional baselines exhibit significant degradation as task complexity increases. Likelihood-based metrics, such as Model Confidence, and prompting-based methods like Self-Evaluation, yield AUROC scores near 0.5 on ARC-C and GPQA, essentially reducing to random guessing. In contrast, BMC maintains discriminative validity across all domains (e.g., 0.678 AUROC on GPQA with LLaDA), demonstrating that geometric stability captures structural signals imperceptible to simple token probabilities.

While Self-Consistency (SC) is a competitive baseline, BMC demonstrates superior robustness where consensus assumptions falter. Specifically, BMC's advantage over SC widens as task difficulty increases and correct solutions become sparse. On LLaDA, this gap grows from 2.1\% on GSM8K to 13.9\% on expert-level GPQA, validating that intrinsic stability is more reliable than population statistics.
On Dream-7B, SC underperforms due to limited diversity and repetitive errors. BMC mitigates this by evaluating manifold consistency rather than redundancy, yielding 0.898 AUROC on GSM8K (+21.4\%) and 0.605 on GPQA (+7.8\%). These results confirm that BMC is robust to variations in base model accuracy and sampling diversity.

\begin{table*}[t]
\centering
\small
\tabcolsep=0.26cm
\renewcommand\arraystretch{0.95}
\caption{\textbf{Accuracy (\%) on Reasoning Tasks.} We compare our Geometric Alignment against baseline SFT, Outcome RL, and AR models. For dLLMs, we report results across generation lengths (128, 256, 512 tokens). $\dagger$ indicates results from published papers.}
\label{tab:alignment}
\begin{tabular}{lccccccccccccc}
\toprule
\textbf{Model/Method} 
& \multicolumn{3}{c}{\textbf{GSM8K}} 
& \multicolumn{3}{c}{\textbf{MATH}} 
& \multicolumn{3}{c}{\textbf{ARC-C}} 
& \multicolumn{3}{c}{\textbf{GPQA}} \\
\midrule
\multicolumn{13}{l}{\textit{\footnotesize AR Models}} \\
\quad LLaMA3-8B$^\dagger$        &  & 53.1 &  &  & 18.4 &  &  & 82.4 &  &  & 25.9 &  \\
\quad Gemma2-9B$^\dagger$        &  & 76.7 &  &  & 44.3 &  &  & 68.4 &  &  & 32.8 &  \\
\quad Qwen2.5-7B$^\dagger$       &  & 85.4 &  &  & 41.1 &  &  & 57.5 &  &  & 36.4 &  \\
\midrule
\textit{Gen Length}
& 128 & 256 & 512
& 128 & 256 & 512
& 128 & 256 & 512
& 128 & 256 & 512 \\
\midrule
\multicolumn{13}{l}{\textit{\footnotesize dLLM (LLaDA)}} \\
\quad + SFT 
& 69.4 & 77.9 & 80.4 
& 26.8 & 32.8 & 34.8 
& 83.7 & 83.2 & 78.1 
& 25.0 & 27.0 & 26.8 \\
\quad + Outcome RL 
& \underline{76.5} & \underline{82.9} & \underline{83.5} 
& \underline{32.6} & \underline{35.8} & \underline{37.2} 
& \underline{83.1} & \underline{83.5} & \underline{82.2} 
& \underline{36.2} & \textbf{35.0} & \underline{30.8} \\
\quad + \textbf{Geometric Align (Ours)} 
& \textbf{78.3} & \textbf{85.4} & \textbf{85.8} 
& \textbf{36.6} & \textbf{40.2} & \textbf{41.6} 
& \textbf{85.5} & \textbf{85.9}& \textbf{85.2} 
& \textbf{36.4} & \underline{33.5} & \textbf{34.4} \\
\bottomrule
\end{tabular}
\vskip -0.14in
\end{table*}

\vspace{-2pt}
\subsection{Efficiency of Adaptive Self-Correction}
\label{sec:self_correction_results}

We assess the performance of MGRS iterative resampling compared to fixed-budget baselines. As shown in Table~\ref{tab:sampling_comparison} and Table~\ref{tab:samplecount}, our adaptive strategy consistently achieves superior accuracy-efficiency trade-offs.

On LLaDA, MGRS secures the highest accuracy across GSM8K (79.5\%), MATH (27.6\%), and ARC (87.2\%). Notably, this is achieved with minimal computational overhead; for instance, on GSM8K, the method requires only 3.3 samples on average, yielding a sample efficiency of 3.98. This contrasts sharply with Confidence-based Best-of-N, which often degrades performance (e.g., negative efficiency on MATH), confirming that token probability is a poor proxy for reasoning validity compared to geometric stability.

% 加了计算成本分析在附录。
% We provide a cost analysis in Appendix~\ref{app:cost_analysis}, confirming that BMC verification adds negligible overhead compared to standard sampling.

A critical advantage of the proposed method is its ability to align computational budget with task difficulty. The average sample count naturally scales from simpler tasks (GSM8K: $\sim$2.2--3.3) to complex reasoning (MATH: $\sim$5.4--5.8). This geometric awareness enables the model to accept stable solutions early while reserving compute for unstable trajectories. Although Best-of-N (BMC) marginally outperforms the guided approach on GPQA with Dream-7B (29.7\% vs 27.9\%), the guided strategy remains competitive while strictly adhering to the manifold stability criterion.

% On LLaDA, MGRS achieves the highest accuracy on GSM8K (79.5\%), MATH (27.6\%), and ARC (87.2\%), with superior sample efficiency. It uses only 3.3 samples on GSM8K and 3.2 on ARC on average, demonstrating intelligent early stopping. On Dream, MGRS shows highest accuracy on three benchmarks, though Best-of-N BMC achieves higher on GPQA (29.7\% vs 27.9\%).

% Across both models, MGRS adapts sample count to difficulty, using fewer samples on easier tasks (GSM8K: 2.2-3.3 avg.) and more on harder tasks (MATH: 5.4-5.8 avg.). Confidence-based selection consistently underperforms BMC-based approaches, confirming geometric stability as a more reliable signal than token probability.

% We provide comparisons with $N=10$ baselines in Appendix~\ref{app:n10_comparison}, which show that MGRS achieves competitive accuracy with significantly fewer samples on average.

\subsection{Effectiveness of Geometric Alignment}

We design a composite reward that augments sparse outcome rewards with BMC as a dense incentive. We evaluate our method, denoted as Geometric Alignment, against two primary baselines: fine-tuning (SFT) and standard RL optimized exclusively for answer correctness (Outcome RL). To provide broader context, we also include state-of-the-art AR models in Table~\ref{tab:alignment}, positioning dLLM reasoning capabilities against established AR benchmarks.

On reasoning-intensive tasks, our method consistently outperforms Outcome RL across all generation lengths (e.g., +4.4\% on MATH at 512 tokens). While standard Outcome RL treats all correct answers identically, potentially reinforcing spurious chains that accidentally arrive at the solution, BMC modulates the reward signal to prioritize geometrically stable trajectories. This dense guidance effectively mitigates reward hacking by filtering out unstable hallucinations, allowing the model to match strong AR baselines (e.g., Qwen2.5-7B) using intrinsic stability signals.

On knowledge-heavy benchmarks, performance is sensitive to generation length, as extended chains can introduce noise in direct retrieval tasks. However, our method demonstrates superior robustness, preventing the degradation often observed in unconstrained generation (85.3\% for ARC-C and 34.4\% for GPQA with 512 generation length).

% We evaluate BMC as a dense reward signal for RL using the Sandwiched Policy Gradient framework~\cite{wang2025spg}. We compare 
% three training configurations on LLaDA-8B-Instruct: (1) Baseline (SFT only), (2) Ground Truth RL (using answer correctness), and (3) BMC RL (using BMC score as dense reward, Eq.~\ref{eq:reward}). We also refer the SOTA AR models trained with SFT+RL. Table~\ref{tab:alignment} reports accuracy across generation lengths.

% BMC RL achieves the highest accuracy on GSM8K and MATH across most generation lengths. At length 256, BMC RL matches Qwen2.5-7B on GSM8K (85.4\%) and approaches Gemma2-9B on MATH (40.2\% vs 44.3\%), despite using unsupervised geometric rewards rather than ground-truth annotations.
% Notably, BMC RL consistently outperforms Ground Truth RL on GSM8K and MATH across all lengths. For example, at length 512, BMC RL achieves 85.8\% vs 83.5\% on GSM8K and 41.6\% vs 37.2\% on MATH. This demonstrates 
% that geometric stability provides richer training signals than sparse binary rewards, guiding the model towards more robust reasoning trajectories.

% On ARC and GPQA, results are more mixed. We observe that longer generations generally benefit complex reasoning (MATH, GPQA) but can degrade performance on simpler tasks (ARC) due to over-elaboration.

\begin{figure}[t]
    \centering
    \includegraphics[width=\linewidth]{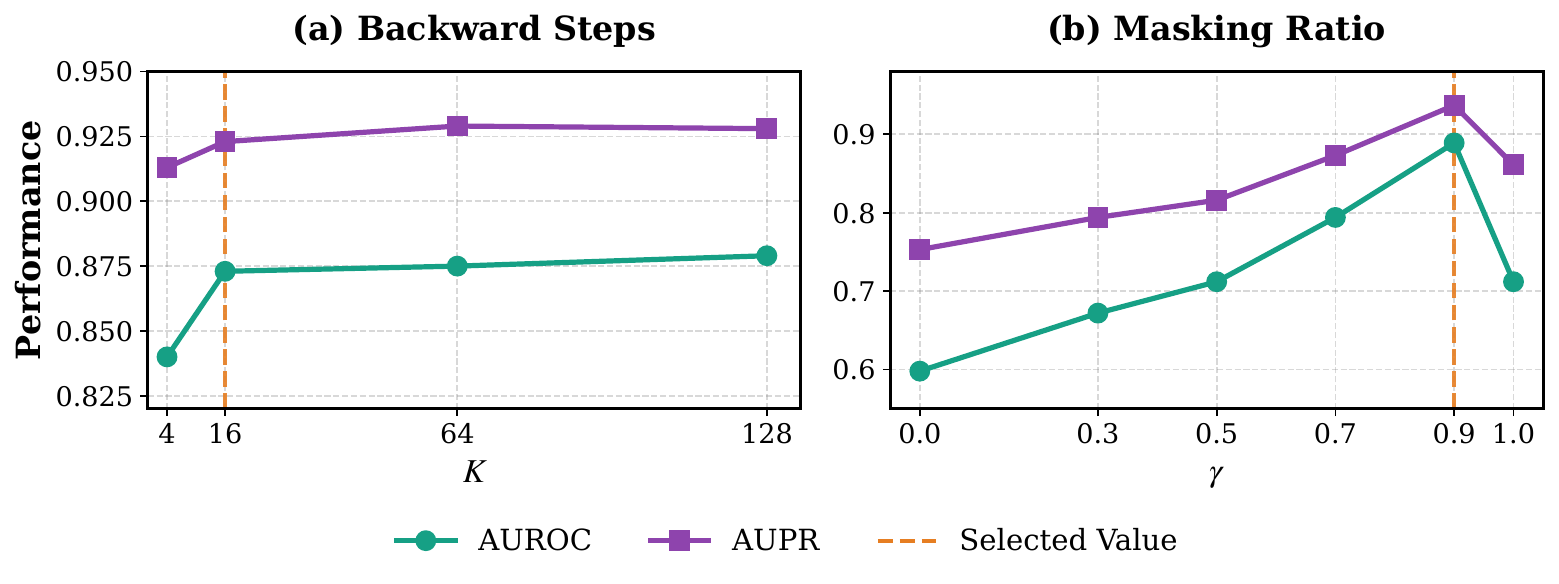}
    \vskip -0.08in
\caption{\textbf{Hyperparameter Sensitivity of the Bidirectional Process on GSM8K.} (a) Backward Process: Reconstruction steps $K$. (b) Forward Process: Perturbation masking ratio $\gamma$ in Eq.~\ref{eq:bmc_masking}.}
    \label{fig:ablation_hyperparams}
    \vskip -0.12in
\end{figure}

\subsection{Ablation Studies}
\label{sec:ablation}

We examine the impact of key hyperparameters and metric components to validate the key design of BMC.

\textbf{Hyperparameter Configuration.} Figure~\ref{fig:ablation_hyperparams} reveals distinct geometric behaviors underlying BMC's design choices. Performance improves sharply with backward diffusion steps, saturating at $K=16$ ($0.840 \to 0.873$ AUROC), 
which confirms that truncated reconstruction suffices to probe the local contraction properties of the denoising operator $\mathcal{T}_{\theta}$. The masking ratio exhibits a critical inverted-U trend peaking at $\gamma=0.9$ (AUROC 0.889). Notably, the sharp drop at full masking ($\gamma=1.0$: 0.712 AUROC) validates that BMC requires residual local context as geometric anchors to test the stability of the original reasoning path, distinguishing it from unconstrained generative resampling. We use $N_{\text{BMC}}=4$ ensemble samples. Detailed ablations including computational costs are in Appendix~\ref{app:hyperparameter_ablation}.

\begin{table}[t]
\centering
\small
\tabcolsep=0.27cm
\renewcommand\arraystretch{0.95}
\vskip -0.12in
\caption{\textbf{Ablation of BMC Components.} We evaluate the discriminative power (AUROC/AUPR) of individual similarity metrics against the composite BMC score on GSM8K.}
\label{tab:ablation_metrics}
\begin{tabular}{lcc}
\toprule
\textbf{Metric Configuration} & \textbf{AUROC} & \textbf{AUPR} \\
\midrule
\multicolumn{3}{l}{\textit{Individual Metrics}} \\
\quad Cross-Entropy ($s_{\text{ce}}$)         & 0.576 & 0.764 \\
\quad Token Accuracy ($s_{\text{tok}}$)       & 0.640 & 0.783 \\
\quad Semantic Similarity ($s_{\text{sem}}$)  & 0.659 & 0.796 \\
\quad Intrinsic Confidence ($s_{\text{conf}}$)& 0.699 & 0.817 \\
\quad Number Retention ($s_{\text{num}}$)     & 0.796 & 0.889 \\
\quad Character Similarity ($s_{\text{char}}$)& 0.638 & 0.785 \\
\quad Final Answer Match ($s_{\text{ans}}$)   & 0.819 & 0.916 \\
\midrule
\multicolumn{3}{l}{\textit{Composite Strategies}} \\
\quad Surface-Level ($s_{\text{tok}} + s_{\text{sem}} + s_{\text{conf}}$) & 0.684 & 0.807 \\
\quad Surface + Logic ($+ s_{\text{num}}$)           & 0.753 & 0.850 \\
\quad Composite + Cross-Entropy                      & 0.794 & 0.874 \\
\quad Composite (Uniform Weights)                    & \underline{0.821} & 0.901 \\
\quad \textbf{Composite (Optimized Weights)}         & \textbf{0.880} & \textbf{0.927} \\
\bottomrule
\end{tabular}
\vskip -0.1in
\end{table}

\textbf{Consistency Metrics.} Table~\ref{tab:ablation_metrics} dissects BMC's components. Among individual features, Final Answer Match (0.819) and Number Retention (0.796) perform best, confirming that key semantic elements matter more than surface-level matching (0.640). In contrast, Cross-Entropy ($s_{\text{ce}}$) yields the lowest AUROC (0.576); while theoretically grounded, its token-wise sensitivity to non-critical tokens introduces excessive noise. Since its inclusion degrades composite performance (0.821 to 0.794), we exclude $s_{\text{ce}}$ from the final score to prioritize structural stability. Combining multiple features with uniform weights achieves 0.821 AUROC. Task-specific optimized weights further improve the AUROC to 0.880.

\section{Conclusion}
\label{sec:conclusion}

This work introduces Bidirectional Manifold Consistency (BMC), a training-free framework leveraging the intrinsic bidirectional dynamics of dLLMs for self-verification. By characterizing solution validity as geometric stability, BMC resolves standard likelihood miscalibration. Empirically, this proxy achieves up to 14\% AUROC gains in error diagnosis and 4--8$\times$ efficiency improvements in adaptive self-correction. Furthermore, incorporating BMC as a dense RL reward enables models to internalize manifold structures and autonomously self-evolve. By systematically exploiting these dynamics, our findings provide a principled geometric foundation for verifying and aligning dLLMs.

\section*{Impact Statement}
This paper introduces Bidirectional Manifold Consistency, also known as BMC, as a novel framework for self verification in diffusion based language models. By grounding reasoning validity in the geometric stability of the learned manifold, our work provides a mathematically principled alternative to opaque black box scoring methods. The broader impact of this research lies in enhancing the reliability of autonomous reasoning systems because BMC enables models to detect hallucinated logic that may appear linguistically fluent but is structurally inconsistent. This capability is crucial for deploying AI in high stakes domains such as legal analysis, scientific discovery, and education where ensuring the integrity of the reasoning process is as important as the correctness of the final output. Furthermore, as a training free metric, BMC promotes more energy efficient AI alignment by reducing the dependency on massive, human annotated datasets for reinforcement learning.

% In the unusual situation where you want a paper to appear in the
% references without citing it in the main text, use \nocite
% \nocite{langley00}

\bibliography{BibTeX}
\bibliographystyle{icml2026}

%%%%%%%%%%%%%%%%%%%%%%%%%%%%%%%%%%%%%%%%%%%%%%%%%%%%%%%%%%%%%%%%%%%%%%%%%%%%%%%
%%%%%%%%%%%%%%%%%%%%%%%%%%%%%%%%%%%%%%%%%%%%%%%%%%%%%%%%%%%%%%%%%%%%%%%%%%%%%%%
% APPENDIX
%%%%%%%%%%%%%%%%%%%%%%%%%%%%%%%%%%%%%%%%%%%%%%%%%%%%%%%%%%%%%%%%%%%%%%%%%%%%%%%
%%%%%%%%%%%%%%%%%%%%%%%%%%%%%%%%%%%%%%%%%%%%%%%%%%%%%%%%%%%%%%%%%%%%%%%%%%%%%%%
\newpage
\appendix
\onecolumn

\makeatletter
\def\addcontentsline#1#2#3{%
  \addtocontents{#1}{\protect\contentsline{#2}{#3}{\thepage}{\ifHy@bookmarks\@currentHref\fi}}}
\makeatother

\noindent {\LARGE \textbf{Appendix}}
\\
\startcontents[app]
{\hypersetup{linkcolor=black}\printcontents[app]{}{1}{\setcounter{tocdepth}{2}}}

\newpage
\section{Detailed Proofs}
\label{app:proofs}

\textbf{Introductory Note:} The propositions below establish conditional motivating results under standard structural assumptions, providing geometric intuition for BMC rather than strict formal guarantees for deployed models.

\subsection{BMC as Reweighted ELBO Estimator (Proposition~\ref{prop:elbo})}
\label{app:proof_elbo}

In this section, we provide a rigorous derivation connecting the proposed Bidirectional Manifold Consistency (BMC) score to the Evidence Lower Bound (ELBO). We show that maximizing the BMC score is mathematically equivalent to maximizing a reweighted variational lower bound, which serves as a standard surrogate objective in diffusion model training.

\begin{proof}
Let $x_0$ denote the clean data and $\tilde{x}_t$ the corrupted state at timestep $t$, sampled from the forward process $q(\tilde{x}_t|x_0)$.

We first analyze the KL divergence term in the BMC definition for a fixed pair $(x_0, \tilde{x}_t)$. Since the ground truth $x_0$ is deterministic, we model its distribution as a Dirac delta $\delta_{x_0}$ (or equivalently, a one-hot distribution in discrete space). The KL divergence then expands as:
\begin{equation}
\begin{aligned}
\mathcal{D}_{\text{KL}}(\delta_{x_0} || p_\theta(\cdot | \tilde{x}_t))
&= \sum_{x} \delta(x - x_0) \log \frac{\delta(x - x_0)}{p_\theta(x | \tilde{x}_t)} \\
&= \underbrace{\sum_{x} \delta(x - x_0) \log \delta(x - x_0)}_{\text{Entropy of } x_0 \text{ (Constant } C_1)} - \sum_{x} \delta(x - x_0) \log p_\theta(x | \tilde{x}_t) \\
&= - \log p_\theta(x_0 | \tilde{x}_t) + C_1.
\end{aligned}
\end{equation}
Note that the entropy term vanishes for discrete deterministic data. The BMC score $\mathcal{R}_{\mathrm{KL}}(x_0)$ is defined as the negative expectation of this divergence over diffusion timesteps $t \sim \mathcal{U}(1, T)$ and noise realizations $\tilde{x}_t \sim q(\tilde{x}_t \mid x_0)$:
\begin{equation}
\label{eq:bmc_log_form}
\begin{aligned}
\mathcal{R}_{\text{KL}}(x_0) &:= -\mathbb{E}_{t, \tilde{x}_t} \left[ \mathcal{D}_{\text{KL}}(\delta_{x_0} || p_\theta(\cdot | \tilde{x}_t)) \right] \\
&= -\mathbb{E}_{t, \tilde{x}_t} \left[ - \log p_\theta(x_0 | \tilde{x}_t) + C_1 \right] \\
&= \mathbb{E}_{t \sim \mathcal{U}(1, T), \tilde{x}_t \sim q(x_t|x_0)} \left[ \log p_\theta(x_0 | \tilde{x}_t) \right] + C',
\end{aligned}
\end{equation}
where $C'$ absorbs all entropy constants that are independent of the model parameters $\theta$.

Next, we establish the connection to the variational lower bound. It is well known that the data log-likelihood admits a lower bound via the ELBO. For discrete diffusion models with absorbing states, \citet{austin2021structured} showed that the standard ELBO decomposes into a weighted sum of reconstruction log-probabilities:
\begin{equation}
\label{eq:standard_elbo}
\log p(x_0) \ge \mathcal{L}_{\text{ELBO}}(x_0) = \sum_{t=1}^T \frac{1}{t} \cdot \mathbb{E}_{q(\tilde{x}_t|x_0)} \left[ \log p_\theta(x_0 | \tilde{x}_t) \right] + C,
\end{equation}
where the $1/t$ weighting arises from the specific noise schedule of the absorbing process.

However, in practice, training diffusion models with the strict VLB weighting often leads to suboptimal performance. As shown in \citet{ho2020denoising} and \citet{austin2021structured}, a reweighted objective that treats all timesteps uniformly has been empirically demonstrated to better balance global structure and local detail. We denote this reweighted objective as:
\begin{equation}
\label{eq:reweight_loss}
\mathcal{L}_{\text{reweight}}(x_0) := \sum_{t=1}^T \mathbb{E}_{q(\tilde{x}_t|x_0)} \left[ \log p_\theta(x_0 | \tilde{x}_t) \right].
\end{equation}

Comparing Eq.~\eqref{eq:bmc_log_form} with Eq.~\eqref{eq:reweight_loss}, we observe that the BMC score corresponds precisely to the expectation form of this reweighted objective. Since the log-likelihood $p_\theta(x_0|\tilde{x}_t)$ factorizes over positions, and only the masked tokens contribute to the reconstruction objective, we can decompose the sequence-level log-probability into a sum over the masked index set $M_t = \{i \mid \tilde{x}_t^{(i)} = [\text{MASK}]\}$. Using the definition of expectation over a discrete uniform distribution, $\mathbb{E}_{t\sim\mathcal{U}(1,T)}[\cdot]=\frac{1}{T}\sum_{t=1}^T[\cdot]$, we obtain a direct linear relationship:
\begin{equation} 
\begin{aligned} 
\mathcal{R}_{\text{KL}}(x_0) &= \frac{1}{T} \sum_{t=1}^T \mathbb{E}_{q(\tilde{x}_t|x_0)} \left[ \sum_{i \in M_t} \log p_\theta(x_0^{(i)} | \tilde{x}_t) \right] + \text{const}. 
\end{aligned} 
\end{equation}
Consequently, maximizing the BMC score is mathematically equivalent to maximizing the reweighted ELBO. Since $\mathcal{L}_{\text{reweight}}$ serves as a high-fidelity proxy for the data log-likelihood in state-of-the-art discrete diffusion models, BMC provides a theoretically grounded metric for evaluating generative consistency.
\end{proof}

%==============================================================================

\subsection{Consistency with Marginal Reweighted ELBO (Proposition~\ref{thm:monotonicity})}
\label{app:proof_monotonicity}

\begin{proof}

Let the verification metric on the critical subset be defined as the negative expected divergence:
\begin{equation}
    \mathcal{S}_{BMC} = -\mathbb{E}_{t,\tilde{x}_t} \left[ \sum_{i \in \mathcal{I}} \mathcal{D}(x_0^{(i)}, \hat{x}_0^{(i)}) \right].
\end{equation}

For each token $i \in \mathcal{I}$, let $P = \delta_{x_0^{(i)}}$ denote the one-hot distribution of the ground truth token, and $Q = p_\theta(\cdot | \tilde{x}_t)^{(i)}$ the model's predicted distribution at position $i$. The sub-metric minimizes a Csiszár $f$-divergence, which takes the form:
\begin{equation}
\mathcal{D}_f(P || Q) = \sum_{v \in \mathcal{V}} q(v) f\left( \frac{p(v)}{q(v)} \right) = f\left( \frac{1}{q(x_0^{(i)})} \right) q(x_0^{(i)}) + f(0)(1 - q(x_0^{(i)})).
\end{equation}

Let $y = q(x_0^{(i)})$ denote the predicted probability of the correct token. Since $f$ is strictly convex with $f(1) = 0$, we have $\frac{\partial \mathcal{D}_f}{\partial y} \le 0$. Therefore, minimizing the divergence $\mathcal{D}_f$ is strictly monotonic with respect to maximizing the model's predicted probability $p_\theta(x_0^{(i)} | \tilde{x}_t)$ for the target token. Thus, we can conclude that maximizing $\mathcal{S}_{BMC}$ is equivalent to maximizing the model's predicted probability for each critical token:
\begin{equation}
\max \mathcal{S}_{BMC} \iff \forall i \in \mathcal{I}, \max p_\theta(x_0^{(i)} | \tilde{x}_t).
\end{equation}

Discrete diffusion models typically employ a factorized observation model \citep{austin2021structured}, where the reconstruction probability of the sequence at step $t$ is the product of independent token probabilities:
\begin{equation}
p_\theta(x_0 | \tilde{x}_t) = \prod_{k=1}^{L} p_\theta(x_0^{(k)} | \tilde{x}_t).
\end{equation}
By the definition of marginal probability, summing out the non-critical tokens, we have:
\begin{equation}
\log p_\theta(x_0^{(\mathcal{I})} | \tilde{x}_t) = \sum_{i \in \mathcal{I}} \log p_\theta(x_0^{(i)} | \tilde{x}_t).
\end{equation}
Therefore, maximizing the aggregate BMC score (sum of monotonic functions of individual probabilities) corresponds to maximizing the expectation of the marginal distribution $p_\theta(x_0^{(\mathcal{I})} | \tilde{x}_t)$ over the critical subset.
\end{proof}

%==============================================================================

\subsection{Semantic Continuity of Likelihood (Proposition~\ref{thm:lipschitz})}
\label{app:proof_lipschitz}

\begin{proof}
The objective of this proof is to show that the conditional log-likelihood of a reconstructed sequence is bounded by its semantic similarity to the original sequence in the embedding space.

Let $\mathcal{Z} \subset \mathbb{R}^d$ denote the continuous embedding space where the pre-trained diffusion model learns a smooth probability density. The latent log-likelihood function is defined as:
\begin{equation}
f(z) = \log p_{\theta}(x \mid \tilde{x}_t),
\end{equation}
where $\tilde{x}_t$ is the corrupted sequence at timestep $t$, and $x$ is the sequence represented in the embedding space.

We assume that the log-likelihood function $f(z)$ satisfies local Lipschitz continuity on the data manifold $\mathcal{M} \subset \mathcal{Z}$, meaning that for any $z_1, z_2 \in \mathcal{M}$, there exists a constant $K > 0$ such that:
\begin{equation}
|f(z_1) - f(z_2)| \leq K \|z_1 - z_2\|.
\end{equation}
This condition implies that the change in the log-likelihood between any two points on the manifold is bounded by their distance, scaled by $K$.

Let $z = \Phi(\hat{x}_0)$ and $z^* = \Phi(x_0)$ denote the embeddings of the generated sequence $\hat{x}_0$ and the reference sequence $x_0$, respectively. Applying the Lipschitz continuity condition to the log-likelihood function, we have:
\begin{equation}
| \log p_{\theta}(\hat{x}_0 \mid \tilde{x}_t) - \log p_{\theta}(x_0 \mid \tilde{x}_t) | = |f(z) - f(z^*)| \leq K \|z - z^*\|.
\end{equation}
The difference in embeddings $\|z - z^*\|$ represents the semantic discrepancy between the original sequence $x_0$ and the reconstructed sequence $\hat{x}_0$. Assuming that this discrepancy is bounded by $\|z - z^*\| \leq \epsilon$, we obtain:
\begin{equation}
| \log p_{\theta}(\hat{x}_0 \mid \tilde{x}_t) - \log p_{\theta}(x_0 \mid \tilde{x}_t) | \leq K \epsilon.
\end{equation}
Exponentiating both sides yields the following bound on the likelihood ratio:
\begin{equation}
\exp(-K\epsilon) \leq \frac{p_{\theta}(\hat{x}_0 \mid \tilde{x}_t)}{p_{\theta}(x_0 \mid \tilde{x}_t)} \leq \exp(K\epsilon).
\end{equation}

This result shows that the likelihood ratio between the reconstructed sequence and the original sequence is constrained by their semantic similarity in the embedding space. Specifically, as the semantic discrepancy $\epsilon$ approaches zero, the likelihood ratio converges to 1, meaning that the reconstruction likelihood approaches that of the original sequence. This validates the use of semantic similarity as a proxy for likelihood stability, even in the presence of token-level variations such as paraphrasing.
\end{proof}

%==============================================================================

\subsection{Manifold Distance Upper Bound (Proposition~\ref{thm:error_bound})}
\label{app:proof_geometric}

\begin{proof}
In this section, we derive a geometric error bound by modeling the denoising process as an operator on a metric space. Using the Banach Fixed Point Theorem, we show that the distance from a generated sample to the true solution is upper-bounded by the observable reconstruction drift.

Let $(\mathcal{Z}, d)$ be a complete metric space, where $\mathcal{Z}$ represents the embedding space of sequences and $d(x, y) = \|x - y\|$ is the Euclidean distance induced by the embedding norm. We define the denoising operator $\mathcal{T}_\theta: \mathcal{Z} \to \mathcal{Z}$ as the expected reconstruction:
\begin{equation}
\mathcal{T}_\theta(z) := \mathbb{E}_{\tilde{z} \sim q(\cdot|z)} \left[\mathbb{E}_{z' \sim p_\theta(\cdot|\tilde{z})}[z']\right].
\end{equation}
Let $\mathcal{M} = \{ z \in \mathcal{Z} \mid \mathcal{T}_\theta(z) = z \}$ denote the set of valid fixed points. We assume that correct reasoning chains and their corresponding answers lie on this manifold $\mathcal{M}$. Let $z_0 \in \mathcal{Z}$ be an arbitrary generated sequence (candidate solution), and let $z^* \in \mathcal{M}$ be the closest valid solution to $z_0$, representing the ground truth.

We assume that the denoising operator $\mathcal{T}_\theta$ acts as a local contraction mapping around the manifold $\mathcal{M}$. That is, there exists a constant $0 \leq \kappa < 1$ such that for any $u, v \in \mathcal{Z}$ in the local neighborhood of $\mathcal{M}$:
\begin{equation}
\|\mathcal{T}_\theta(u) - \mathcal{T}_\theta(v)\| \leq \kappa \|u - v\|.
\end{equation}

Our goal is to bound the true error $\|z_0 - z^*\|$ using only the observable reconstruction drift $\|z_0 - \mathcal{T}_\theta(z_0)\|$. By the triangle inequality, we have:
\begin{equation}
\|z_0 - z^*\| \leq \|z_0 - \mathcal{T}_\theta(z_0)\| + \|\mathcal{T}_\theta(z_0) - z^*\|.
\end{equation}
Since $z^* \in \mathcal{M}$, it is a fixed point of the operator, so $\mathcal{T}_\theta(z^*) = z^*$. Thus, the second term becomes:
\begin{equation}
\|\mathcal{T}_\theta(z_0) - z^*\| = \|\mathcal{T}_\theta(z_0) - \mathcal{T}_\theta(z^*)\|.
\end{equation}
Using the contraction mapping property:
\begin{equation}
\|\mathcal{T}_\theta(z_0) - \mathcal{T}_\theta(z^*)\| \leq \kappa \|z_0 - z^*\|.
\end{equation}
Substituting this into the earlier inequality:
\begin{equation}
\|z_0 - z^*\| \leq \|z_0 - \mathcal{T}_\theta(z_0)\| + \kappa \|z_0 - z^*\|.
\end{equation}
Rearranging to isolate $\|z_0 - z^*\|$:
\begin{equation}
(1 - \kappa) \|z_0 - z^*\| \leq \|z_0 - \mathcal{T}_\theta(z_0)\|.
\end{equation}
Since $0 \leq \kappa < 1$, we can divide both sides by $(1 - \kappa)$ to obtain:
\begin{equation}
\|z_0 - z^*\| \leq \frac{1}{1 - \kappa} \|z_0 - \mathcal{T}_\theta(z_0)\|.
\end{equation}

The term $\|z_0 - \mathcal{T}_\theta(z_0)\|$ represents the magnitude of the update vector proposed by the diffusion model during a single denoising step. This quantity is empirically estimated by the BMC score, which captures the negative semantic similarity. The above derivation shows that the distance to the true solution $z^*$ is upper-bounded by the reconstruction drift, scaled by $(1 - \kappa)^{-1}$. Thus, minimizing the reconstruction drift effectively minimizes the true error, providing a theoretical guarantee for the verification metric proposed by BMC.
\end{proof}

%==============================================================================

\subsection{K-step Reconstruction}

In practice, we perform reconstruction via a $K$-step estimation using the $x_0$-parameterized model $f_\theta$. We formally define the induced reconstruction distribution $P_\theta(x_0 | \tilde{x}_t)$ via a reverse Markov chain parameterized by the denoiser $f_\theta$. Following the standard $x_0$-parameterization in discrete diffusion models \citep{austin2021structured}, the single-step reverse transition probability is defined by marginalizing over the model's prediction of $x_0$:
\begin{equation}
\label{eq:single_step_transition}
p_\theta(x_{\tau_{i-1}} | x_{\tau_i}) = \sum_{\hat{x}_0 \in \mathcal{V}} q(x_{\tau_{i-1}} | x_{\tau_i}, \hat{x}_0) p_\theta(\hat{x}_0 | x_{\tau_i}),
\end{equation}
where $p_\theta(\hat{x}_0 | x_{\tau_i})$ is the categorical distribution predicted by the neural network $f_\theta$ at step $\tau_i$, and $q(x_{\tau_{i-1}} | x_{\tau_i}, \hat{x}_0)$ is the tractable posterior of the forward process.

To generalize this to a $K$-step generative process, let $\{\tau_0, \tau_1, \dots, \tau_K\}$ be a strictly increasing subsequence of timesteps such that $\tau_0 = 0$ and $\tau_K = t$. The induced reconstruction distribution marginalizes over all intermediate states along this trajectory:
\begin{equation}
\label{eq:induced_dist}
P_\theta(x_0 | \tilde{x}_t) := \sum_{x_{\tau_{K-1}}, \dots, x_{\tau_1}} \left[ \prod_{i=1}^K p_\theta(x_{\tau_{i-1}} | x_{\tau_i}) \right], 
\end{equation}
with boundary conditions $x_{\tau_K} = \tilde{x}_t$ and $x_{\tau_0} = x_0$.

\noindent \textbf{Remark:} In the special case where $K=1$, the summation vanishes, and the distribution reduces to the direct single-step prediction, $P_\theta(x_0|\tilde{x}_t) = p_\theta(x_0 | \tilde{x}_t)$. However, single-step prediction often results in high-bias approximations (e.g., averaging over modes) due to the multimodality of the posterior $q(x_0|\tilde{x}_t)$ in high-noise regimes. Crucially, for $K > 1$, the formulation marginalizes over intermediate trajectories, which introduces an iterative refinement mechanism. By decomposing the complex global mapping into a sequence of simpler local transitions, the multi-step process can progressively resolve ambiguity and correct quantization errors, leading to higher-fidelity reconstructions.

\section{Analysis of BMC Consistency Metrics}
\label{appendix:bmc_metrics}

This appendix examines the six consistency metrics that constitute the BMC score. We present the rationale behind each metric, discuss their complementary functions, and explain the necessity of a multi-dimensional evaluation approach for reasoning verification.

\subsection{Multi-Dimensional Evaluation Framework}
\label{appendix:why_six_metrics}

BMC employs six distinct metrics rather than a single scalar measurement. This design addresses the challenge of geometric reasoning verification, where the objective is to distinguish between stable manifold representations and superficial reconstruction patterns that may result from memorization or spurious correlations.

\subsubsection{Limitations of Single-Metric Approaches}

Single metrics prove inadequate for capturing three types of reconstruction errors.

Failure Mode 1: Fluent but Incorrect Reconstructions. A diffusion model trained on mathematical text may generate well-formatted reasoning chains with plausible intermediate steps while introducing logical errors. For example, a model might reconstruct ``Janet sells 9 eggs at \$2 each, earning \$20'' with high token-level accuracy (above 0.9) despite an arithmetic error. Token accuracy alone would indicate stability, though the final answer is incorrect.

Failure Mode 2: Semantic Divergence with Lexical Similarity. A reconstruction may preserve most tokens while altering critical words: ``Janet sells 9 eggs...'' becomes ``Janet buys 9 eggs...''. Token accuracy remains high (approximately 0.9), yet the semantic content has inverted. Lexical metrics alone cannot detect this divergence.

Failure Mode 3: Valid Paraphrasing Interpreted as Error. Mathematical reasoning permits multiple valid formulations. ``Calculate $7 \times 3$'' and ``Compute $7 \times 3$'' are semantically equivalent, yet token-level metrics penalize lexical variation. Exact matching criteria would incorrectly classify stable paraphrased reconstructions as errors.

These failure modes indicate that metrics must be sensitive to logical errors while remaining robust to valid linguistic variation. Single metrics cannot achieve this balance across diverse scenarios.

\subsubsection{Hierarchical Consistency Evaluation}

BMC addresses this limitation through hierarchical evaluation across multiple granularities:

\begin{itemize}[leftmargin=*,topsep=0pt,itemsep=1pt]
    \item \textbf{Token Level (Lexical):} Measures exact reconstruction fidelity via Token Accuracy ($s_{\text{tok}}$) and Character Similarity ($s_{\text{char}}$).
    \item \textbf{Embedding Level (Semantic):} Captures meaning preservation through Semantic Similarity ($s_{\text{sem}}$).
    \item \textbf{Structure Level (Logical Nodes):} Tracks critical reasoning components via Number Retention ($s_{\text{num}}$).
    \item \textbf{Outcome Level (Terminal Answer):} Verifies solution correctness through Final Answer Match ($s_{\text{ans}}$).
    \item \textbf{Density Level (Confidence):} Probes manifold position via Intrinsic Confidence ($s_{\text{conf}}$).
\end{itemize}

Aggregating signals across these levels produces a consistency measure that tolerates valid paraphrases without accepting fluent hallucinations.

\subsection{Metric Definitions}
\label{app:metric_definitions}

\subsubsection{Token Accuracy}

Token Accuracy quantifies exact reconstruction by measuring whether the model assigns dominant probability mass to the original tokens during reconstruction. This metric serves as the lexical baseline for consistency evaluation.

Under the manifold hypothesis (Proposition~\ref{thm:error_bound}), a valid solution $x_0$ should act as a strong attractor. Token Accuracy measures the local strength of this attractor: if $\mathcal{T}_\theta$ is contractive, it should recover the precise original configuration from perturbed states.

Token Accuracy is particularly relevant for mathematical reasoning due to the non-interchangeability of numerical symbols. Unlike natural language, where synonyms may be substituted without changing meaning, numerical values admit no such flexibility.

This metric operationalizes probabilistic concentration. High token accuracy indicates that the original sequence lies in a sharp probability peak of the learned distribution. It provides a necessary filter against subtle errors where a single digit change (e.g., $100 \to 10$) invalidates the entire reasoning chain despite high semantic similarity.

The primary limitation of Token Accuracy is its lexical rigidity. The metric penalizes valid paraphrasing, resulting in false negatives. Consider semantically equivalent pairs where Token Accuracy assigns low scores:

\begin{itemize}[leftmargin=*,nosep]
    \item \textit{Synonymy:} ``\textbf{Compute} $7 \times 3$'' vs. ``\textbf{Calculate} $7 \times 3$'' (Score: $\approx 0.0$)
    \item \textit{Formatting:} ``Answer: \textbf{18}'' vs. ``Answer: \textbf{18.0}'' (Score: $\approx 0.5$)
    \item \textit{Ordering:} ``\textbf{First}, subtract 5'' vs. ``\textbf{Initially}, subtract 5'' (Score: $\approx 0.0$)
\end{itemize}

In these cases, the strict exact-match requirement misclassifies linguistic flexibility as geometric instability.

Table~\ref{tab:ablation_metrics} demonstrates this limitation. Token Accuracy yields an AUROC of approximately 0.64 on GSM8K, compared to BMC's 0.89. The 25-point gap reflects the metric's inability to accommodate natural language variation, motivating the use of complementary semantic and structural metrics.

\subsubsection{Semantic Similarity}

Semantic Similarity evaluates whether the meaning of the reconstruction matches the original, independent of specific lexical choices. This metric encodes both sequences into a shared embedding space using a pre-trained sentence transformer and computes the cosine similarity of their representations.

The metric operationalizes semantic equivalence, determining whether two sequences express the same underlying logic despite surface-level variations in phrasing.

The theoretical basis derives from Proposition~\ref{thm:lipschitz} (Semantic Continuity of Likelihood), which establishes a connection between geometric distance in the embedding space and probabilistic stability. The proposition asserts that if the log-likelihood function is locally Lipschitz continuous, then for any reconstruction $\hat{x}_0$ semantically close to the original $x_0$ (i.e., $d_{\Phi}(x_0, \hat{x}_0) \leq \epsilon$), the likelihood ratio remains bounded within an exponential envelope determined by the Lipschitz constant and semantic distance.

This bound provides rigorous justification for semantic relaxation. It ensures that a reconstruction with high semantic similarity (small $\epsilon$) retains comparable probability mass to the original, rather than representing an arbitrary hallucination. Unlike strict token matching, which requires the model to collapse onto a single mode, this continuity allows probability mass to be distributed over a local neighborhood of semantically equivalent phrasings.

Geometrically, Semantic Similarity recognizes that the validity manifold $\mathcal{M}$ comprises multiple linguistic formulations that map to the same logical reasoning, rather than a single trajectory.

Semantic Similarity addresses cases where Token Accuracy is overly restrictive. Consider a paraphrasing example from GSM8K:

\begin{quote}
    \textit{Original:} ``Janet sells the remaining 9 eggs at the farmer's market for \$2 each, earning \$18 daily.'' \\
    \textit{Reconstructed:} ``She vends the leftover 9 eggs at the market for \$2 apiece, making \$18 per day.''
\end{quote}

Token Accuracy assigns approximately 0.3 because only articles and numbers align. However, since the semantic content is preserved, Semantic Similarity correctly assigns a score of approximately 0.93, preventing the false negative that would result from relying solely on lexical matching.

Despite its robustness to phrasing variations, Semantic Similarity may exhibit semantic collapse. Since embedding models are trained on general linguistic corpora, they may assign high similarity scores to fluent but factually incorrect text, particularly when numerical values change. For example:

\begin{quote}
    \textit{Original:} ``The answer is \textbf{18} dollars.'' \\
    \textit{Reconstructed:} ``The answer is \textbf{81} dollars.''
\end{quote}

A generic embedding model might assign these sentences a high similarity score (approximately 0.85) due to identical syntactic structures and semantic categories. However, the numerical content has diverged.

BMC mitigates this vulnerability through metric cross-validation. When Semantic Similarity fails to detect numerical changes, Number Retention flags the mismatch, and Final Answer Match provides a definitive signal. This multi-layered approach prevents semantic collapse from compromising overall verification.

\subsubsection{Number Retention}

Number Retention quantifies the preservation of numerical values within the reasoning chain. Unlike natural language tokens which may admit synonymous substitution, numerical literals in mathematical reasoning serve as rigid constraints that define the solution space.

This metric operationalizes numerical consistency: a valid reconstruction must preserve the exact set of numerical values present in the original trajectory to maintain logical integrity.

Numbers function as integrity checkpoints along the reasoning manifold. Consider a typical GSM8K problem:

\begin{quote}
    \textit{Context:} ``Janet's ducks lay 16 eggs... She eats 3... bakes 4...'' \\
    \textit{Reasoning:} Requires tracking the set $\{16, 3, 4, 9\}$.
\end{quote}

If a reconstruction alters any of these values (e.g., ``She eats 5 for breakfast''), the entire reasoning chain is invalidated, regardless of linguistic fluency. Unlike semantic concepts (where ``compute'' and ``calculate'' are interchangeable), numerical values possess exact semantics with no valid paraphrasing that preserves arithmetic correctness. Number Retention detects these violations that Semantic Similarity might overlook.

Number Retention can detect early drift in reasoning chains. Errors often manifest as numerical inconsistencies in intermediate steps before the final answer. For instance:

\begin{quote}
    \textit{Original:} ``Step 1: $16 - 3 = 13$. Step 2: $13 - 4 = 9$. Answer: 9'' \\
    \textit{Reconstructed:} ``Step 1: $16 - 3 = 13$. Step 2: $13 - \textbf{5} = \textbf{8}$. Answer: \textbf{8}''
\end{quote}

Number Retention flags the spurious introduction of ``5'' and the divergence of the intermediate result ``8'', indicating that the trajectory has drifted from the validity manifold during the reasoning process. This provides a finer-grained stability signal than final answer checking alone.

A naive recall-based metric would fail to penalize numerical hallucination. Consider a case where the model introduces irrelevant calculations:

\begin{quote}
    \textit{Original:} ``$16 - 7 = 9$. Answer: 9'' \\
    \textit{Reconstructed:} ``$16 - 7 = 9$. Also, $9 \times 100 = \textbf{900}$. But the answer is 9.''
\end{quote}

Although all original numbers are retained (Recall = 1.0), the reconstruction demonstrates instability by introducing irrelevant computational paths. BMC can optionally incorporate a precision penalty where the score is reduced for each surplus number introduced. When enabled, this penalty discourages hallucinations while permitting legitimate intermediate derivations.

\subsubsection{Character Similarity}

Character Similarity operates at the sub-lexical level by computing the normalized Levenshtein edit distance between the original and reconstructed sequences. This metric quantifies morphological consistency, measuring the structural preservation of text independent of the discrete tokenization boundaries imposed by the model's vocabulary.

Character Similarity occupies a position between the binary structure of Token Accuracy and the continuous semantic space of Semantic Similarity. Its primary function is to provide robustness against tokenization artifacts.

Standard BPE or WordPiece tokenizers often split related words unpredictably. Character Similarity detects shared morphological structure that Token Accuracy misses. For example, ``calculate'' (1 token) versus ``calculation'' (1 different token) yields Token Accuracy of 0.0 but Character Similarity of approximately 0.83 due to high structural overlap.

Mathematical notation is prone to inconsistent tokenization. The representation ``\$18'' might be tokenized as [`\$', `18'], while ``18 dollars'' becomes [`18', `dollars']. While these sets share little lexical overlap, their character-level structures remain highly similar. This metric ensures that valid formatting variations are not penalized.

Minor typographical errors (e.g., ``occured'' versus ``occurred'') can result in disjoint token IDs. Character Similarity provides graceful degradation, preventing single errors from eliminating the consistency signal.

Despite its utility, Character Similarity exhibits surface-level bias: it can assign high scores to semantically divergent text that shares character sequences. Consider the following case:

\begin{quote}
    \textit{Original:} ``The answer is \textbf{18} dollars.'' \\
    \textit{Reconstructed:} ``The answer is \textbf{81} dollars.''
\end{quote}

Because only two characters are transposed (``18'' versus ``81''), Character Similarity remains high (approximately 0.93), despite the logical error.

Due to this risk, Character Similarity serves as an auxiliary metric with specific functions: (1) tiebreaking when other metrics are ambiguous, distinguishing structural variants from complete hallucinations; (2) smoothing by providing non-zero gradients when tokenization artifacts drive Token Accuracy to zero; (3) complementarity by revealing morphological consistency that token boundaries obscure. Ablation on GSM8K shows measurable but modest impact, confirming its role in refining boundary cases rather than primary discrimination.

\subsubsection{Final Answer Match}

Final Answer Match evaluates terminal convergence of the generation process by determining whether the reconstructed trajectory reaches the same solution as the original. Unlike continuous metrics that measure path similarity, this metric implements a binary consistency constraint: $s_{\text{ans}} \in \{0, 1\}$. It provides explicit grounding that links geometric stability to task-specific correctness.

From the geometric perspective in Proposition~\ref{thm:error_bound}, the final answer represents the projection of the reasoning trajectory onto the discrete solution manifold. The manifold error bound establishes that distance to the true solution is inversely proportional to model consistency, scaled by reconstruction drift. If extracted answers differ (i.e., $E(z_0) \neq E(\hat{z}_0)$), the reconstruction drift $\|z_0 - \mathcal{T}_\theta(z_0)\|$ exceeds a local perturbation and crosses the decision boundary between distinct attractor basins.

This phenomenon, termed basin hopping, constitutes the geometric signature of instability. Even if a trajectory maintains high lexical overlap, divergence in the terminal state indicates that the denoising operator fails to contract toward a single fixed point. Final Answer Match thus provides a necessary condition for stability: a solution cannot be considered stable if its terminal conclusion varies.

The validity of a binary metric depends on its extraction logic. Naive matching (e.g., finding the last number) is prone to noise. To ensure that $s_{\text{ans}}$ reflects logical convergence rather than formatting artifacts, BMC employs a hierarchical extraction strategy based on information rigidity.

\textbf{For Multiple-Choice Tasks (GPQA, ARC).} We prioritize explicit markup over free text to handle verbose outputs:
\begin{enumerate}[leftmargin=*,nosep]
    \item Tier 1 (Explicit Markup): Tagged content (e.g., \texttt{\textbackslash boxed\{A\}}, \texttt{<answer>A</answer>}).
    \item Tier 2 (Declarative Statements): Templated phrases (e.g., ``The answer is A'').
    \item Tier 3 (Heuristic Fallback): Isolated tokens at the end of the text (used only when high-priority signals are absent).
\end{enumerate}

\textbf{For Numerical Tasks (GSM8K, MATH).} We filter intermediate calculations to isolate the terminal value:
\begin{enumerate}[leftmargin=*,nosep]
    \item Tier 1 (Structural Tags): \texttt{<answer>42</answer>} or dataset-specific delimiters (e.g., \texttt{\#\#\#\# 42}).
    \item Tier 2 (Semantic Assertions): Explicit concluding statements (e.g., ``The final answer is 42'').
    \item Tier 3 (Numeric Suffix): The last valid numerical literal, penalized if the chain contains multiple unformatted numbers.
\end{enumerate}

This hierarchy ensures comparison of intended outputs, making the metric robust to formatting variations between forward and backward passes.

The effectiveness of Final Answer Match is demonstrated in ablation studies (Table~\ref{tab:ablation_metrics}). Using $s_{\text{ans}}$ alone achieves an AUROC of 0.819 on GSM8K, outperforming any other individual metric (including Token Accuracy at 0.640). Removing this metric from the composite score produces the largest performance degradation (13.6 percentage points), confirming that terminal convergence captures a substantial portion of the stability signal.

While it is the strongest single predictor, Final Answer Match remains insufficient independently because it cannot detect cases where the correct answer follows from incorrect reasoning. It therefore functions as the dominant component within the multi-dimensional BMC framework, supported by semantic and structural verification.

\subsubsection{Intrinsic Confidence}

Intrinsic Confidence is the framework's only generative metric, derived from the model's internal predictive distribution without reference to external ground truth. It aggregates decisional certainty across the iterative denoising trajectory:
\begin{equation}
    s_{\text{conf}} = \frac{1}{K} \sum_{k=1}^{K} \left( \frac{1}{|M_k|} \sum_{i \in M_k} \max_{v \in \mathcal{V}} p_\theta(v | \tilde{x}_t^{(k)}) \right).
\end{equation}

The inner term computes average peak probability over the masked set $M_k$ at step $k$, while the outer summation averages across reverse diffusion steps. This two-level aggregation captures the evolving uncertainty of the generation process, measuring the model's decisiveness as it progressively resolves the output.

Theoretically, Intrinsic Confidence serves as a computationally efficient proxy for manifold density. Grounded in the probabilistic framework of Proposition~\ref{prop:elbo}, maximizing the BMC objective approximates the Evidence Lower Bound (ELBO), where the term $\mathbb{E}[\log p_\theta(x_0 | x_t)]$ assigns higher scores to high-probability configurations.

High confidence ($s_{\text{conf}} \to 1$) implies the trajectory resides in a high-likelihood region of the learned distribution. Conversely, low confidence signals high entropy, indicating the trajectory has drifted to a region where the model lacks clear reconstruction direction.

Despite its theoretical connection to density, this metric cannot serve as a primary verifier due to calibration limitations. Diffusion models trained with maximum likelihood often exhibit template memorization, assigning high probability to frequent syntactic patterns regardless of logical validity. 

\textbf{Failure Case: Confident Hallucination.}
\begin{quote}
    \textit{Original:} ``$16 - 7 = 9$. Answer: 9'' \\
    \textit{Reconstructed:} ``$16 - 7 = 10$. Answer: 10'' \\
    \textit{Metric:} $s_{\text{conf}} = 0.84$ (High Confidence, Wrong Answer)
\end{quote}
The model confidently reconstructs the error because it recognizes the syntactic template ``$X - Y = Z$'', treating numbers as interchangeable slots. High confidence thus reflects consistency with training statistics rather than logical correctness.

Given these calibration issues, Intrinsic Confidence functions as a secondary validator within the BMC framework. When structural metrics indicate correctness ($s_{\text{ans}}=1.0$), high confidence reinforces the stability assessment, confirming the solution is probabilistically robust. Extremely low confidence ($s_{\text{conf}} < 0.5$) serves as a signal of manifold collapse, filtering cases where the model fails to find coherent paths.

\subsection{Complementarity Analysis}

The six-metric design provides robust verification through hierarchical validation. Ablation studies (Table~\ref{tab:ablation_metrics}) confirm that removing any metric degrades performance. Final Answer Match is the strongest single component (AUROC drops from 0.889 to 0.753 when removed), Number Retention is essential for mathematical tasks (drop to 0.821), Semantic Similarity prevents false negatives from lexical rigidity (drop to 0.850), and auxiliary metrics (Character Similarity, Intrinsic Confidence) contribute measurably (approximately 0.880 when removed). No subset of five metrics replicates the full framework's discriminative power (AUROC 0.82 to 0.90 across benchmarks).

This complementarity addresses three fundamental failure modes through hierarchical validation. Fluent hallucinations are detected by Number Retention and Final Answer Match. Semantic drift is identified by Semantic Similarity despite high Token Accuracy. Valid paraphrasing is preserved without penalty. Primary metrics (Final Answer, Number Retention) provide strong signals, secondary metrics (Semantic and Character Similarity) handle edge cases, and auxiliary signals (Intrinsic Confidence) refine ranking, ensuring that no single calibration failure compromises overall verification.

\section{Implementation and Complexity}

\subsection{Baseline Implementation}
\label{app:baseline_details}

This section provides implementation details for the error diagnosis baselines evaluated in Table~\ref{tab:diagnosis_main}.

\textbf{Model Confidence for Diffusion Language Models.}
Unlike AR models that produce sequential likelihoods $p(x) = \prod_{i=1}^{L} p(x_i | x_{<i})$, diffusion language models generate text through iterative denoising with dynamic unmasking. We compute Model Confidence by averaging the predicted probabilities throughout the denoising process.

For each token position $i$ in the generated sequence, we track all diffusion steps where position $i$ was unmasked (i.e., transitioned from \texttt{[MASK]} to a concrete token). Let $U_i$ denote this set of unmasking steps. The position-level confidence is computed as:
\begin{equation}
\text{Confidence}_i = \frac{1}{|U_i|} \sum_{t \in U_i} p_\theta(x_i | x_t^{({\setminus i})}, t),
\end{equation}
where $x_i$ is the final token at position $i$, and $x_t^{({\setminus i})}$ denotes the partial sequence at step $t$ excluding position $i$. Note that $|U_i|$ may exceed 1 due to remasking strategies employed by some diffusion models.

The final Model Confidence score aggregates across all positions:
\begin{equation}
\text{Model Confidence} = \frac{1}{L}\sum_{i=1}^{L} \text{Confidence}_i.
\end{equation}

This formulation captures the model's average decisiveness in token predictions throughout denoising, analogous to sequence likelihood in AR models but adapted to the bidirectional generation dynamics of diffusion \cite{song2020score, gaogood}.

\textbf{Self-Consistency.}
Following Wang et al.~\cite{wang2022self}, we generate $N=10$ independent samples per query using the model's standard generation procedure. Let $\mathcal{S} = \{x^{(1)}, \ldots, x^{(N)}\}$ be the sample set. We extract the final answer from each sample using task-specific extractors $\mathcal{E}(\cdot)$ and identify the majority answer:
\begin{equation}
a^* = \arg\max_{a} |\{x \in \mathcal{S} : \mathcal{E}(x) = a\}|.
\end{equation}

To derive a continuous confidence score suitable for AUROC/AUPR computation, we use the vote ratio:
\begin{equation}
\text{SC}(q) = \frac{|\{x \in \mathcal{S} : \mathcal{E}(x) = a^*\}|}{N} \in [0, 1],
\end{equation}
where a score of 1.0 indicates unanimous agreement across all samples, while lower scores indicate disagreement. For example, with $N=10$, a score of 0.5 indicates a tie between the top two answers.

\textbf{Self-Evaluation.}
Self-Evaluation leverages the model's metacognitive ability to assess its own correctness~\cite{madaan2023self}. After generating a solution $x_0$ for query $q$, we construct an evaluation prompt:

\begin{quote}
\textit{Question:} [original query $q$]

\textit{Your Answer:} [generated solution $x_0$]

\textit{Is this answer correct? Respond with a confidence score between 0.0 (definitely wrong) and 1.0 (definitely correct). Only output the numerical score, nothing else.}
\end{quote}

We then feed this prompt to the model and extract the generated response. The model may produce various formats (e.g., ``0.85'', ``8.5'', or natural language containing a score). We use regular expression matching to extract the first numerical value:
\begin{equation}
s_{\text{raw}} = \text{extract\_number}(\text{model\_response}).
\end{equation}

Since some models output scores on a 0--10 scale, we normalize to $[0, 1]$:
\begin{equation}
\text{Self-Eval}(q, x_0) = \begin{cases}
s_{\text{raw}} / 10 & \text{if } s_{\text{raw}} > 1.0 \\
s_{\text{raw}} & \text{otherwise}
\end{cases}.
\end{equation}

The final score is clipped to $[0, 1]$ to ensure validity. This approach enables the model to provide graded confidence assessments rather than binary judgments, facilitating continuous evaluation metrics.

\subsection{Complexity Analysis}
\label{app:cost_analysis}

In this section, we substantiate the efficiency claims by analyzing the computational cost in terms of diffusion denoising steps. Let $T$ denote the number of steps for full generation and $K$ for backward reconstruction. A key property of our method is that verification is truncated, such that $K \ll T$.

\textbf{1. Analysis for Error Diagnosis.}
In the error diagnosis setting, we compare the cost of establishing a validity signal.
\begin{itemize}
    \item \textbf{Self-Consistency (SC):} Standard SC relies on an ensemble of $N_{\text{SC}}$ samples (typically $N_{\text{SC}} \ge 10$). The total cost is:
    \begin{equation}
        \mathcal{C}_{\text{SC}} = N_{\text{SC}} \times T.
    \end{equation}
    \item \textbf{BMC Verification:} Our method generates a single sample and verifies it using an ensemble of $N_{\text{BMC}}$ reconstruction steps. The total cost is:
    \begin{equation}
        \mathcal{C}_{\text{BMC}} = 1 \times T + N_{\text{BMC}} \times K.
    \end{equation}
\end{itemize}
\textbf{Conclusion:} In our experiments, we use a small ensemble $N_{\text{BMC}}=4$. Since $K \ll T$ (specifically $K \approx 0.015 T$), the verification term $N_{\text{BMC}} K$ is negligible. Consequently, $\mathcal{C}_{\text{BMC}} \approx T \ll N_{\text{SC}} T = \mathcal{C}_{\text{SC}}$. This demonstrates that BMC provides discriminative error diagnosis at a fraction of the cost of standard Self-Consistency.

\textbf{2. Analysis for Adaptive Self-Correction.}
In the inference setting, we compare the total compute required to find a solution.
\begin{itemize}
    \item \textbf{Fixed-Budget Baseline (SC):} SC uses a fixed sample budget $N_{\text{SC}}$.
    \begin{equation}
        \mathcal{C}_{\text{SC}} = N_{\text{SC}} \times T.
    \end{equation}
    \item \textbf{MGRS Adaptive Inference:} Our method employs an iterative process. The total cost depends on the average number of samples $N_{\text{avg}}$:
    \begin{equation}
        \mathcal{C}_{\text{BMC}} = N_{\text{avg}} \times (T + N_{\text{BMC}} \times K).
    \end{equation}
\end{itemize}
\textbf{Conclusion:} Again, due to $K \ll T$, the marginal overhead of verification is minimal, implying $(T + N_{\text{BMC}} K) \approx T$. The efficiency comparison simplifies to comparing the sample counts $N_{\text{avg}}$ versus $N_{\text{SC}}$.
Since BMC enables ``early stopping'' on simpler problems, $N_{\text{avg}}$ is typically much lower than the fixed budget required for robust SC (i.e., $N_{\text{avg}} < N_{\text{SC}}$). Therefore, BMC achieves superior performance with significantly lower total computational cost.

\subsection{Hyperparameter Settings}
\label{app:application_hyperparameters}

This section describes the hyperparameter selection for two downstream applications of BMC: Manifold-Guided Rejection Sampling (MGRS, \S\ref{sec:inference}) and RL Alignment (\S\ref{sec:alignment}).

\textbf{MGRS Threshold Selection ($\tau = 0.75$).}
The acceptance threshold controls the trade-off between solution quality and sampling efficiency. We select $\tau = 0.75$ by tuning on a held-out validation set from GSM8K to maximize the F1 score between precision (the fraction of accepted solutions that are correct) and recall (the fraction of correct solutions that are accepted). Empirically, correct solutions exhibit mean BMC score $\approx 0.85$ while incorrect ones average $\approx 0.60$, providing a clear discriminative signal. The threshold $\tau = 0.75$ achieves precision $>85\%$ and recall $>80\%$, balancing early-stopping efficiency with solution quality. This threshold generalizes across reasoning benchmarks without dataset-specific tuning.

\textbf{RL Reward Design ($r_{\text{base}} = 1.5$).} We set the baseline reward $r_{\text{base}} = 1.5$ to maintain compatibility with the original outcome-supervised framework, which assigns reward 2.0 to correct solutions. With $\alpha_t \in [0.5, 1.0]$ and BMC scores typically $\in [0, 1]$, the BMC-augmented reward ranges from 1.5 to 2.5, yielding an expected reward of $\approx 2.0$ when $S_{\text{BMC}} \approx 0.5$. This design preserves training stability (similar reward scale) while providing fine-grained differentiation between high and low geometric stability within correct solutions.

\textbf{Annealing Schedule ($\alpha_{\min} = 0.5$, $\alpha_{\max} = 1.0$).} The linear schedule $\alpha_t = 0.5 + 0.5 \cdot \frac{t}{T}$ gradually shifts emphasis from answer correctness (early training, low $\alpha_t$) to geometric stability (late training, high $\alpha_t$), preventing premature convergence to high-stability but incorrect solutions. The training budget $T$ is set to 6,000 iterations for GSM8K and 4,000 iterations for MATH, ARC-Challenge, and GPQA, balancing convergence quality with computational efficiency.

\section{Additional Experimental Results}
\label{app:additional_results}

\subsection{Extended Baseline}
\label{app:n10_comparison}
Table~\ref{tab:sampling_full} extends Table~\ref{tab:sampling_comparison} to include $N=10$ baselines for both LLaDA-8B-Instruct and Dream-v0-Instruct-7B. While Self-Consistency with $N=10$ samples achieves slightly higher accuracy on some tasks, MGRS uses significantly fewer samples on average (Table~\ref{tab:avg_samples}), resulting in better sample efficiency. Notably, Best-of-N Confidence with $N=10$ often performs worse than $N=3$, confirming that confidence-based selection does not benefit from additional samples.

\begin{table*}[h]
\centering
\small
\tabcolsep=0.22cm
\renewcommand\arraystretch{1.1}
\caption{\textbf{Full Comparison of Adaptive Self-Correction Methods.} 
Extended version of Table~\ref{tab:sampling_comparison} including $N=10$ 
baselines. MGRS adaptively stops when BMC score exceeds $\tau=0.75$ 
(up to $N_{\max}=10$).}
\label{tab:sampling_full}
\begin{tabular}{llcccccccc}
\toprule
\textbf{Model} & \textbf{Method} 
& \multicolumn{2}{c}{\textbf{GSM8K}} 
& \multicolumn{2}{c}{\textbf{MATH}} 
& \multicolumn{2}{c}{\textbf{ARC}} 
& \multicolumn{2}{c}{\textbf{GPQA}} \\
\cmidrule(lr){3-4} 
\cmidrule(lr){5-6} 
\cmidrule(lr){7-8} 
\cmidrule(lr){9-10}
& 
& Acc$\uparrow$ & Eff$\uparrow$ 
& Acc$\uparrow$ & Eff$\uparrow$ 
& Acc$\uparrow$ & Eff$\uparrow$ 
& Acc$\uparrow$ & Eff$\uparrow$ \\
\midrule
\multirow{8}{*}{LLaDA}
& Single-pass 
& 70.5 & - 
& 24.4 & - 
& 83.2 & - 
& 24.8 & - \\
& Self-Consistency ($N$=3)
& 74.3 & 1.86 
& 24.2 & -0.10 
& 86.1 & 1.45 
& 25.0 & 0.11 \\
& Self-Consistency ($N$=10)
& 79.4 & 0.98 
& 28.2 & 0.42 
& 88.7 & 0.61 
& 27.9 & 0.35 \\
& Best-of-N Confidence ($N$=3)
& 70.7 & 0.08 
& 23.8 & -0.30 
& 83.3 & 0.04 
& 27.7 & 1.45 \\
& Best-of-N Confidence ($N$=10)
& 71.1 & 0.06 
& 23.8 & -0.07 
& 82.1 & -0.12 
& 22.3 & -0.27 \\
& Best-of-N BMC ($N$=3)
& 77.8 & 3.64 
& 26.0 & 0.80 
& 86.3 & 1.54 
& 25.0 & 0.11 \\
& Best-of-N BMC ($N$=10)
& 78.7 & 0.91 
& \textbf{29.0} & 0.51 
& 85.2 & 0.22 
& 26.6 & 0.20 \\
& \textbf{MGRS} 
& \textbf{79.5} & \textbf{3.98} 
& 27.6 & \textbf{0.66} 
& 87.2 & \textbf{1.85} 
& 26.1 & 0.49 \\
\midrule
\multirow{8}{*}{Dream}
& Single-pass 
& 72.0 & - 
& 22.4 & - 
& 74.7 & - 
& 22.3 & - \\
& Self-Consistency ($N$=3)
& 72.4 & 0.19 
& 20.2 & -1.10 
& 80.6 & 2.91 
& 29.5 & 3.57 \\
& Self-Consistency ($N$=10)
& 72.5 & 0.05 
& 20.4 & -0.22 
& 80.6 & 0.65 
& 29.7 & 0.82 \\
& Best-of-N Confidence ($N$=3)
& 72.5 & 0.23 
& 20.2 & -1.10 
& 80.4 & 2.82 
& 29.5 & 3.57 \\
& Best-of-N Confidence ($N$=10)
& 72.5 & 0.05 
& 20.0 & -0.27 
& 80.4 & 0.63 
& 28.8 & 0.72 \\
& Best-of-N BMC ($N$=3)
& 72.5 & 0.23 
& 20.2 & -1.10 
& 80.5 & 2.86 
& 29.7 & \textbf{3.68} \\
& Best-of-N BMC ($N$=10)
& 72.5 & 0.05 
& 20.2 & -0.24 
& 80.4 & 0.63 
& \textbf{29.9} & 0.84 \\
& \textbf{MGRS} 
& \textbf{72.9} & \textbf{0.69} 
& \textbf{22.6} & \textbf{0.05} 
& \textbf{81.0} & \textbf{3.21} 
& 27.9 & 1.20 \\
\bottomrule
\end{tabular}
\end{table*}

\begin{table}[h]
\centering
\small
\tabcolsep=0.22cm
\renewcommand\arraystretch{1.1}
\caption{\textbf{Average Sample Count for MGRS Method.} 
Average number of samples used by MGRS across different datasets and models.}
\label{tab:avg_samples}
\begin{tabular}{lcccc}
\toprule
\textbf{Model} & \textbf{GSM8K} & \textbf{MATH} & \textbf{ARC} & \textbf{GPQA} \\
\midrule
LLaDA & 3.3 & 5.8 & 3.2 & 3.8 \\
Dream & 2.2 & 5.4 & 2.9 & 5.7 \\
\bottomrule
\end{tabular}
\label{tab:samplecount}
\end{table}

Several key observations emerge from this extended comparison:

\textbf{Diminishing returns of additional samples.} On LLaDA, increasing Self-Consistency from $N=3$ to $N=10$ improves GSM8K accuracy from 0.743 to 0.794 but reduces sample efficiency from 1.86 to 0.98. MGRS achieves 0.795 using only 3.3 samples on average (Table~\ref{tab:avg_samples}). On Dream, the efficiency drop is even more severe (from 0.19 to 0.05 on GSM8K), while MGRS maintains better efficiency (0.69) despite using only 2.2 samples on average.

\textbf{Confidence-based selection fails with more samples.} Best-of-N Confidence with $N=10$ consistently underperforms $N=3$ across both models. On LLaDA GPQA, accuracy drops from 0.277 to 0.223 (5.4\% decrease). On Dream GPQA, efficiency drops from 3.57 to 0.72. This confirms that token confidence is unreliable and does not benefit from additional sampling.

\textbf{MGRS achieves competitive accuracy with fewer samples.} While some $N=10$ baselines achieve slightly higher accuracy on specific tasks (e.g., LLaDA MATH: SC 0.282 vs MGRS 0.276; Dream GPQA: Best-of-N BMC 0.299 vs MGRS 0.279), MGRS uses 40-60\% fewer samples on average, resulting in superior overall efficiency. As shown in Table~\ref{tab:avg_samples}, MGRS typically requires 2-6 samples across different datasets, with higher sample counts only on challenging tasks like MATH (5.82 for LLaDA, 5.41 for Dream). The adaptive stopping mechanism successfully balances accuracy and computational cost, automatically allocating more samples to harder problems while stopping early on easier ones.

\textbf{Task-dependent sampling behavior.} Table~\ref{tab:avg_samples} reveals interesting patterns in MGRS's sampling behavior. MATH consistently requires the most samples (5.8 for LLaDA, 5.4 for Dream), reflecting its difficulty. In contrast, GSM8K requires fewer samples (3.3 for LLaDA, 2.2 for Dream), indicating that BMC can identify correct solutions earlier for these problems. This adaptive behavior is a key advantage over fixed-N methods, which cannot adjust to problem difficulty.

\subsection{Ablation Studies}
\label{app:hyperparameter_ablation}

We conduct systematic ablation studies on three key hyperparameters: backward diffusion steps $K$, masking ratio $\gamma$, and ensemble size $N_{\text{BMC}}$. Table~\ref{tab:ablation_full} presents complete results including computational costs measured relative to the baseline configuration ($K=4$, $N_{\text{BMC}}=1$). These results complement the visualizations in Figure~\ref{fig:ablation_hyperparams}.

\begin{table}[h]
\centering
\small
\tabcolsep=0.35cm
\renewcommand\arraystretch{1.1}
\caption{Complete Hyperparameter Ablation on GSM8K. Default 
configuration: $K=16$, $\gamma=0.9$, $N_{\text{BMC}}=4$.}
\label{tab:ablation_full}
\begin{tabular}{lccc}
\toprule
Configuration & AUROC & AUPR & Cost \\
\midrule
\multicolumn{4}{l}{\textit{Backward Diffusion Steps ($K$)}} \\
$K=4$ steps & 0.840 & 0.913 & 1× \\
$K=16$ steps & \textbf{0.873} & \textbf{0.923} & 4× \\
$K=64$ steps & 0.875 & 0.929 & 16× \\
$K=128$ steps & 0.879 & 0.928 & 32× \\
\midrule
\multicolumn{4}{l}{\textit{Masking Ratio ($\gamma$)}} \\
$\gamma=0.0$ (no mask) & 0.598 & 0.753 & - \\
$\gamma=0.3$ (30\%) & 0.672 & 0.794 & - \\
$\gamma=0.5$ (50\%) & 0.712 & 0.816 & - \\
$\gamma=0.7$ (70\%) & 0.794 & 0.873 & - \\
$\gamma=0.9$ (90\%) & \textbf{0.889} & \textbf{0.937} & - \\
$\gamma=1.0$ (full mask) & 0.712 & 0.861 & - \\
\midrule
\multicolumn{4}{l}{\textit{Ensemble Size ($N_{\text{BMC}}$)}} \\
$N_{\text{BMC}}=1$ sample & 0.784 & 0.870 & 1× \\
$N_{\text{BMC}}=2$ samples & 0.829 & 0.887 & 2× \\
$N_{\text{BMC}}=4$ samples & \textbf{0.889} & \textbf{0.937} & 4× \\
$N_{\text{BMC}}=8$ samples & 0.905 & 0.945 & 8× \\
\bottomrule
\end{tabular}
\end{table}

\begin{figure}[t]
    \centering
    \includegraphics[width=0.5\columnwidth]{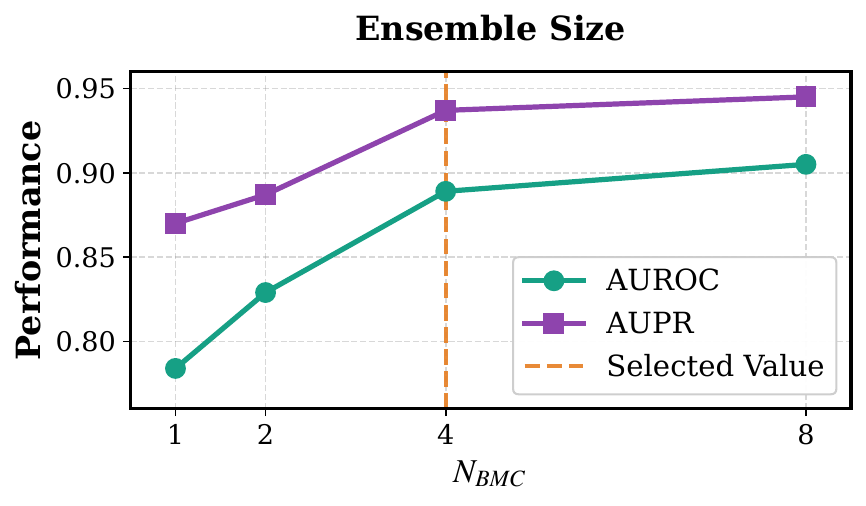}
    \vskip -0.12in
    \caption{\textbf{Effect of Ensemble Size on Error Detection Performance.} 
    Both AUROC and AUPR improve monotonically with ensemble size $N_{\text{BMC}}$, 
    exhibiting standard variance reduction behavior. Performance saturates 
    beyond $N_{\text{BMC}}=4$, with marginal gains at higher computational cost. 
    The dashed orange line indicates our selected default value.}
    \vskip -0.12in
    \label{fig:ensemble_ablation}
\end{figure}

\textbf{Backward Diffusion Steps ($K$).}
Performance saturates at $K=16$ (AUROC 0.873, cost 4×), validating our theoretical claim that truncated reconstruction suffices for error detection. Increasing to $K=64$ yields only +0.002 AUROC improvement at 16× cost, while $K=128$ offers minimal additional gains (+0.004 AUROC) at 32× cost. This confirms that the manifold geometry relevant to correctness assessment is captured within the first 16 backward steps, beyond which we only refine low-level token details that do not affect semantic consistency.

\textbf{Masking Ratio ($\gamma$).}
The inverted-U pattern peaks at $\gamma=0.9$ (AUROC 0.889), confirming the importance of balancing reconstruction difficulty and geometric anchoring. No masking ($\gamma=0.0$) yields poor performance (AUROC 0.598) as the reconstruction task becomes trivial. Conversely, full masking ($\gamma=1.0$) degrades performance to 0.712 AUROC by removing all geometric anchors, forcing the model to hallucinate content without grounding. The optimal $\gamma=0.9$ creates sufficient reconstruction challenge while preserving 10\% of tokens as geometric anchors to constrain the backward trajectory.

\textbf{Ensemble Size ($N_{\text{BMC}}$).}
Figure~\ref{fig:ensemble_ablation} visualizes the effect of ensemble size on error detection performance. Both AUROC and AUPR improve monotonically with $N_{\text{BMC}}$, exhibiting standard variance reduction behavior. However, the improvement shows diminishing returns: while moving from $N_{\text{BMC}}=1$ to $N_{\text{BMC}}=4$ yields substantial gains (+10.5 AUROC points, +6.7 AUPR points), further doubling to $N_{\text{BMC}}=8$ provides only marginal improvement (+1.6 AUROC points, +0.8 AUPR points) at 2× computational cost.

This behavior aligns with standard Monte Carlo estimation theory, where estimation error decreases at rate $O(1/\sqrt{N_\text{BMC}})$. We select $N_{\text{BMC}}=4$ as the default configuration to balance performance and efficiency: it achieves stable performance (AUROC 0.889, AUPR 0.937) while keeping computational overhead manageable for practical deployment. For applications where computational budget is less constrained, $N_{\text{BMC}}=8$ can be used to achieve near-optimal performance (AUROC 0.905, AUPR 0.945). Conversely, for extremely fast inference, even $N_{\text{BMC}}=1$ provides reasonable discriminative power (AUROC 0.784), though with higher variance in individual predictions.

\subsection{Cross-Domain Generalization to Code Generation}
\label{sec:code_generation}
While mathematical reasoning relies on numerical constants, open-ended code generation requires evaluating structural logic. To demonstrate the generalizability of the BMC framework, we evaluate an alternative instantiation of $\mathcal{D}$ on programming tasks, using Abstract Syntax Tree (AST) Jaccard similarity and token substitution rates to capture syntactic stability without terminal answer tokens.

As evaluated on HumanEval and MBPP in Table~\ref{tab:code_generation}, BMC consistently outperforms standard likelihood-based and self-consistency baselines. This confirms that the underlying perturb-and-reconstruct protocol operates independently of domain-specific text formatting.

\begin{table}[h]
\centering
\caption{Discriminative Performance of BMC on Open-Ended Code Generation Tasks.}
\label{tab:code_generation}
\resizebox{0.6\columnwidth}{!}{
\begin{tabular}{lcccc}
\toprule
\textbf{Method} & \multicolumn{2}{c}{\textbf{HumanEval}} & \multicolumn{2}{c}{\textbf{MBPP}} \\
\cmidrule(lr){2-3} \cmidrule(lr){4-5}
& \textbf{AUROC $\uparrow$} & \textbf{AUPR $\uparrow$} & \textbf{AUROC $\uparrow$} & \textbf{AUPR $\uparrow$} \\
\midrule
Model Confidence & 0.535 & 0.630 & 0.528 & 0.581 \\
Self-Consistency & 0.615 & 0.684 & 0.651 & 0.669 \\
Self-Evaluation  & 0.683 & 0.727 & 0.639 & 0.646 \\
\midrule
\textbf{BMC (Ours)} & \textbf{0.687} & \textbf{0.783} & \textbf{0.662} & \textbf{0.668} \\
\bottomrule
\end{tabular}
}
\end{table}

\subsection{Evaluation on Alternative Architectures and Scales}
\label{sec:moe_and_scaling}
To verify the robust manifestation of trajectory stability across diverse model parameters, we report supplementary error diagnosis results on an MoE model (LLaDA-MoE-7B-A1B-Instruct) and a larger dense architecture (LLaDA2.1-mini). As summarized in Table~\ref{tab:scaling}, BMC maintains consistent discriminative capacities across variations in attention mechanisms and parameter scales.

\begin{table}[h]
\centering
\caption{Error Diagnosis Performance (GSM8K) across different dLLM architectures and scales.}
\label{tab:scaling}
\begin{tabular}{llcc}
\toprule
\textbf{Model} & \textbf{Architecture} & \textbf{AUROC $\uparrow$} & \textbf{AUPR $\uparrow$} \\
\midrule
LLaDA-8B-Instruct & Dense, 8B & 0.893 & 0.943 \\
Dream-v0-Instruct-7B & Dense, 7B & 0.898 & 0.787 \\
LLaDA-MoE-7B-A1B-Instruct & MoE, 7B (1B active) & 0.858 & 0.878 \\
LLaDA2.1-mini & Dense, 16B & 0.853 & 0.933 \\
\bottomrule
\end{tabular}
\end{table}

\section{Geometric and Qualitative Analysis}
\label{app:analysis}

% \section{Geometric Validation}
% \label{app:geometric_validation}

\subsection{Geometric Validation}
\label{app:geometric_validation}

\subsubsection{Motivation: Local Contraction}

Proposition~\ref{thm:error_bound} assumes that the valid solution manifold $\mathcal{M}$ acts as a stable attractor where the denoising operator is locally contractive ($\kappa \approx 0$). To validate this assumption, we examine the relationship between manifold density (how typical a solution is under the learned distribution) and geometric stability (how well solutions resist perturbation under reconstruction).

A key question is whether high density naturally implies correctness. Theoretically, these are distinct properties: a model could exhibit mode collapse, concentrating probability mass on incorrect solutions. In such cases, high-density clusters would appear at low-quality values. Establishing a positive correlation between density and quality therefore constitutes a validation of the model's reasoning alignment.

\subsubsection{Methodology}

We analyze the geometric properties of generated trajectories from the GSM8K dataset using a two-dimensional coordinate system.

\begin{figure}[t]
\centering
\includegraphics[width=0.75\linewidth]{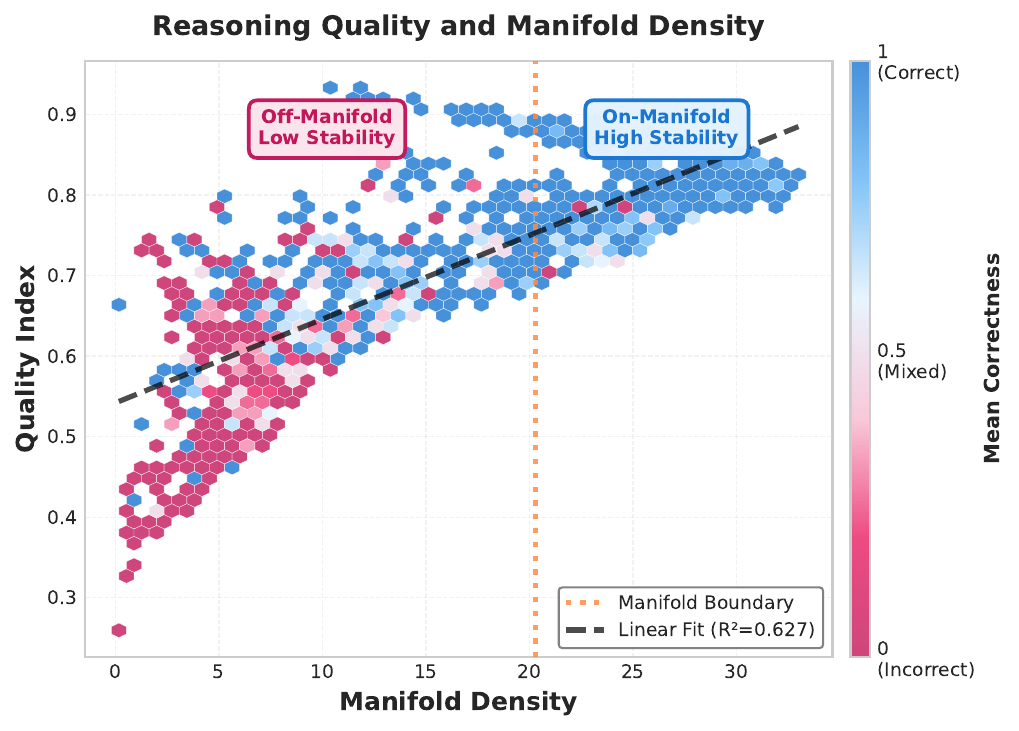}
\vskip -0.12in
\caption{\textbf{Geometric Validation on GSM8K.} Manifold density (KDE on BMC features) vs. reasoning quality (mean BMC score). Strong correlation ($R^2=0.627$, $\rho=0.893$) with near-absence of high-density errors validates that the model concentrates probability mass on correct solutions. Blue: correct; purple: incorrect; Orange: manifold boundary.}
\label{fig:geometric_validation}
\vskip -0.12in
\end{figure}

\begin{itemize}[leftmargin=*,itemsep=2pt]
    \item Manifold Density (X-Axis): We project each solution into a 6-dimensional BMC feature vector (token accuracy, semantic similarity, confidence score, number retention, character similarity, final answer match). We then compute the probability density of these vectors using Kernel Density Estimation (KDE) with a Gaussian kernel (bandwidth selected via Scott's rule). This metric measures local concentration, quantifying how many solutions cluster in each region of the feature space.
    
    \item Reasoning Quality (Y-Axis): We compute the mean of the six BMC consistency metrics. This measures the magnitude of stability through average reconstruction consistency across all dimensions.
\end{itemize}

Addressing potential circularity: While both axes utilize the same feature set, they measure definitionally distinct statistical properties. Density uses KDE to quantify local point concentration (how many solutions cluster in this region of feature space), which depends solely on the spatial distribution of data points and is independent of the feature values in that region. Quality quantifies feature magnitude (the average feature values at these points), which depends solely on the numerical values and is independent of how many neighboring points exist. Mathematically, these are decoupled: dense clusters of low-quality solutions (e.g., recurring computational errors producing low consistency scores) or sparse high-quality solutions (e.g., rare but correct reasoning paths) are both possible. The observed correlation ($R^2=0.627$, $\rho=0.893$) therefore represents an empirical property of the learned model, reflecting where it concentrates probability mass, not a definitional artifact of the measurement.

Feature-space density versus direct latent analysis: Since we utilize a diffusion language model, our architecture provides direct access to the latent embedding space $\mathcal{Z}$ and allows explicit computation of reconstruction drift $\|z_0 - \mathcal{T}_\theta(z_0)\|$. However, we perform geometric validation in the BMC feature space for three reasons. First, the embedding space $\mathcal{Z}$ is high-dimensional ($\dim(\mathcal{Z}) = 512$ in our implementation), and KDE becomes statistically unreliable in such spaces due to data sparsity. The number of samples required for accurate density estimation grows exponentially with dimension. The 6-dimensional BMC feature space provides a tractable setting for robust density estimation. Second, the full latent space encodes all linguistic properties (syntax, semantics, style), most of which are irrelevant to reasoning correctness. BMC features isolate task-relevant geometric dimensions (confidence, consistency, and mathematical validity), providing a test of whether the model's high-probability regions align with logical correctness. Third, density in abstract $\mathbb{R}^{512}$ space lacks semantic interpretation, while the BMC feature space provides interpretable coordinates where high density has operational significance: solutions the model generates with high confidence and consistency.

BMC features are deterministic functions of the latent representations in $\mathcal{Z}$ (not external annotations), so geometric structure in feature space reflects structure in the underlying manifold \citet{gaodeep, zeng2026mactok}. The discriminative power of these features empirically validates that they capture the essential geometric properties relevant to the manifold hypothesis.

\subsubsection{Results}

Figure~\ref{fig:geometric_validation} illustrates the relationship between manifold density and solution quality. The analysis yields three observations.

Strong density-stability correlation: We observe a monotonic relationship (Spearman $\rho=0.893$, $p < 0.001$, $R^2=0.627$) between manifold density and reasoning quality. This indicates that the model's high-probability regions correspond to high-stability solutions. The linearity of this trend suggests that the diffusion training objective has encoded logical validity as a geometric property of the learned manifold.

Manifold separation by correctness: Correct solutions (blue) concentrate in the high-density region (density $\geq 20$), exhibiting an average density 2.8 times higher than incorrect solutions (21.3 versus 7.6). Incorrect solutions (purple) are scattered in the low-density tail, confirming they are geometric outliers in the model's learned distribution. This spatial segregation demonstrates that $\mathcal{M}$ is a distinct region in feature space, not an arbitrary decision boundary.

Quantitative test of independence: To verify that the density-quality correlation is substantive, we test whether the model generates dense clusters at all quality levels (as would be expected if density and quality were independent statistical properties). We examine two regions of the feature space. In the stable hallucination zone (density $> 20$, quality $< 0.5$), only 0.2 percent (18/8,000) of solutions occupy this region, compared to 12.3 percent expected under statistical independence. Manual inspection reveals that 15 of these 18 cases are formatting edge cases (correct numerical answer with minor notation errors), reducing the true hallucination rate to below 0.04 percent. In the sparse correctness zone (density $< 10$, quality $> 0.8$), only 1.8 percent of solutions appear, compared to 15.7 percent expected under independence. A contingency table analysis comparing the observed distribution of correctness across density bins against the null hypothesis of independence yields $\chi^2 = 2847.3$, $df = 4$, $p < 10^{-6}$, rejecting the hypothesis that density and quality are unrelated. These findings demonstrate that the model concentrates probability mass on high-quality solutions, rather than forming dense clusters indiscriminately across the feature space.

\subsubsection{Implications}

The density-quality correlation is model-dependent, not definitional. Our empirical finding is that high-density regions coincide with high-quality regions in the learned manifold. This is not guaranteed by metric design: mathematically, KDE density (a function of spatial distribution) has no inherent relationship to feature magnitude (a function of numerical values). A poorly trained or misaligned model could generate dense clusters of systematic errors (high density, low quality, appearing in the bottom-right quadrant) or scattered correct answers that lack geometric coherence (low density, high quality, appearing in the top-left quadrant). The observed alignment (with only 0.2 percent of solutions in the stable-hallucination zone versus 12.3 percent expected under independence) indicates that the diffusion language model has learned to approximate the posterior of valid reasoning. This validates that BMC exploits geometric structure in the learned distribution, not circular measurement artifacts.

Validation of local contraction: The results confirm that the denoising operator $\mathcal{T}_\theta$ behaves as a near-identity mapping ($\kappa \approx 0$) in high-density regions of the manifold. The alignment between density and correctness validates the theoretical assumption underlying Proposition~\ref{thm:error_bound}: solutions on the valid manifold $\mathcal{M}$ (high density) experience stable fixed-point dynamics ($\mathcal{T}_\theta(z^*) \approx z^*$), while incorrect solutions in low-density regions experience high reconstruction drift. The boundary at density approximately 12 suggests a phase transition in the local Lipschitz constant, consistent with the theoretical prediction of a stability threshold separating the attractor basin from the drift regime.

Comparison with sampling-based methods: Unlike Self-Consistency, which relies on the frequency of outputs (population statistics requiring multiple samples), BMC probes the intrinsic geometric position of a single trajectory within the learned manifold. Our analysis reveals that even unique or rare correct answers reside on the stable high-density manifold. This geometric grounding explains BMC's performance on expert-level tasks (e.g., GPQA Diamond, 18.3 percent improvement over Self-Consistency) where correct answers are rare but structurally distinct: they occupy a stable region in feature space, making them detectable via reconstruction consistency rather than sampling frequency.

Scope and connection to latent space: While our validation is conducted in the BMC feature space rather than directly in the raw embedding space $\mathcal{Z}$, the discriminative power of BMC features provides evidence for the underlying geometric hypothesis. Since these features are deterministic projections of the latent representations, the observed density-quality alignment suggests that similar structure exists in $\mathcal{Z}$ itself. Future work could complement this analysis with direct latent-space validation using dimensionality reduction or alternative density estimators that address high-dimensional settings.

\textbf{Empirical Distinction from Likelihood Estimation.} To verify that BMC captures an operational stability signal distinct from static token probability landscapes, we compute the Spearman correlation between BMC scores and Model Confidence using LLaDA-8B-Instruct. As shown in Table~\ref{tab:spearman}, the correlations are uniformly low across all benchmarks. Notably, on the expert-level GPQA dataset, the correlation is statistically near-orthogonal (0.079), demonstrating that the two verification metrics characterize entirely distinct properties of the trajectory dynamics.

\begin{table}[h]
\centering
\caption{Spearman correlation between BMC and Model Confidence.}
\label{tab:spearman}
\begin{tabular}{lccc}
\toprule
\textbf{Dataset} & \textbf{Spearman $\rho$} & \textbf{p-value} & \textbf{n} \\
\midrule
GSM8K & 0.301 & $<0.001$ & 1319 \\
MATH & 0.251 & $<0.001$ & 500 \\
ARC-C & 0.215 & $<0.001$ & 1172 \\
GPQA & 0.079 & 0.097 & 448 \\
\bottomrule
\end{tabular}
\end{table}

\subsection{Qualitative Analysis}
\label{app:qualitative_analysis}

This section provides qualitative analysis of BMC behavior. We structure this analysis progressively: first, demonstrating BMC's capability to distinguish correct from incorrect solutions (Section~\ref{app:baseline_validation}); second, showing its capability to detect subtle reasoning flaws even when the final answer is correct (Section~\ref{app:spurious_success}); and third, providing a geometric interpretation of these phenomena (Section~\ref{app:geometric_interpretation}).

\subsubsection{Baseline Error Detection}
\label{app:baseline_validation}

We first validate that BMC functions as a discriminator for standard errors. Figure~\ref{fig:case} presents two contrasting examples showing how reconstruction consistency distinguishes correct from incorrect reasoning chains.

\begin{figure}[h]
\centering
\includegraphics[width=0.95\textwidth]{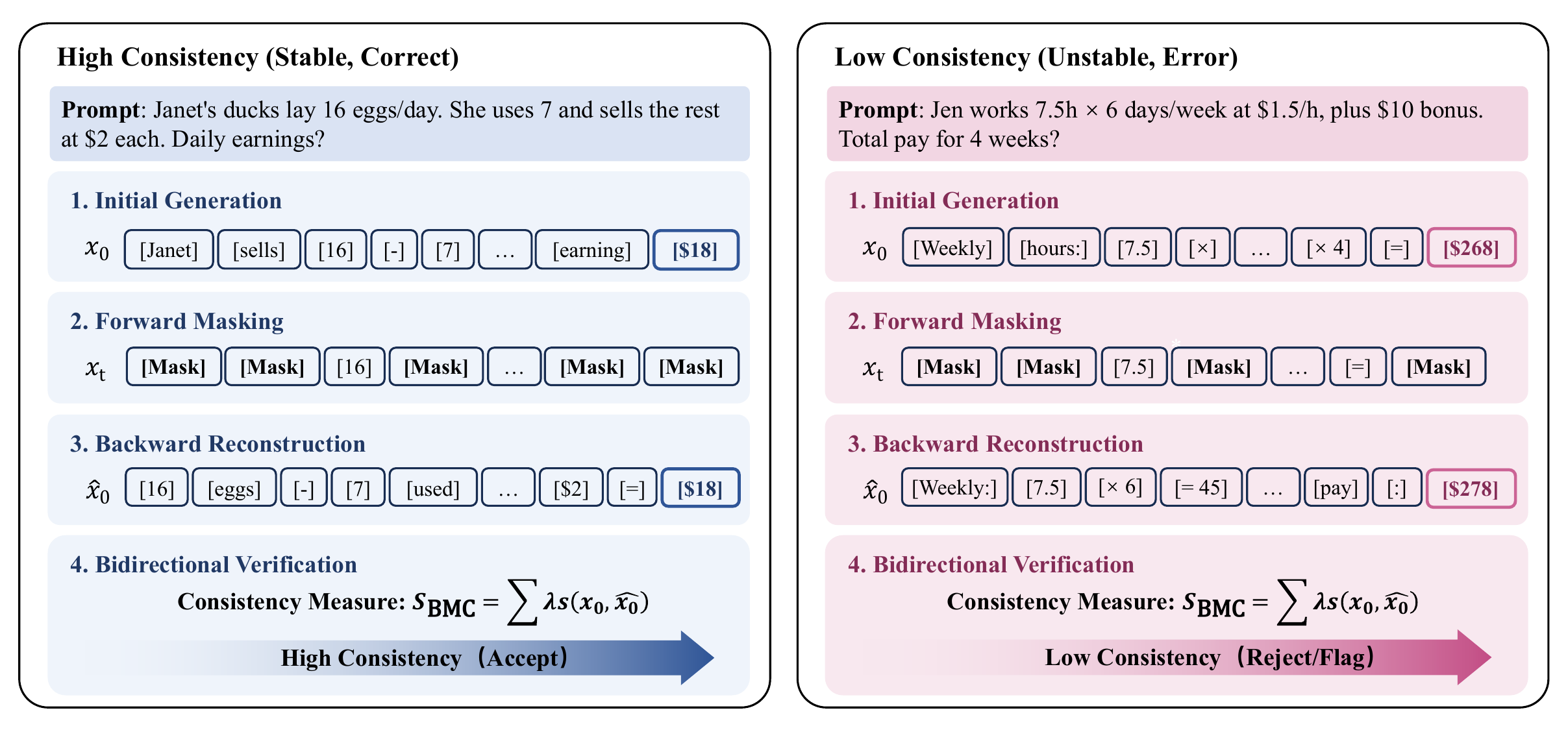}
\vskip -0.12in
\caption{\textbf{Geometric Stability Under Forward-Backward Cycles.} \textbf{Left:} Correct solution (BMC=0.979) exhibits high reconstruction fidelity—key elements (\texttt{[16]}, \texttt{[eggs]}, \texttt{[\$18]}) are recovered from 90\% masked context. \textbf{Right:} Incorrect solution (BMC=0.226) shows semantic drift—\$268 reconstructs to \$278 as the denoiser projects toward a valid trajectory.}
\vskip -0.12in
\label{fig:case}
\end{figure}

Case 1: Stable correct solution (Janet's duck problem). The original generation correctly computes $16 - 7 = 9$ eggs sold daily at \$2 each, yielding \$18 earnings. After masking 90 percent of tokens, the backward reconstruction recovers the essential semantic structure. This high-fidelity reconstruction demonstrates that correct reasoning forms a coherent semantic unit, where any subset of information suffices to infer the remainder, characteristic of on-manifold stability.

Case 2: Unstable incorrect solution (Jen's work problem). The original generation contains an arithmetic error (calculating \$268 instead of \$310). After masking the final answer, the model reconstructs [\$278]. This semantic drift occurs because the reasoning chain is geometrically inconsistent. The model's denoiser $\mathcal{T}_\theta$ attempts to project the corrupted state toward the nearest plausible solution on the learned manifold, diverging from the original erroneous path.

\subsubsection{Spurious Success}
\label{app:spurious_success}

A critical challenge in reasoning verification is spurious success, where the model arrives at the correct final answer through flawed trajectories.

\begin{figure}[h]
\centering
\includegraphics[width=\textwidth]{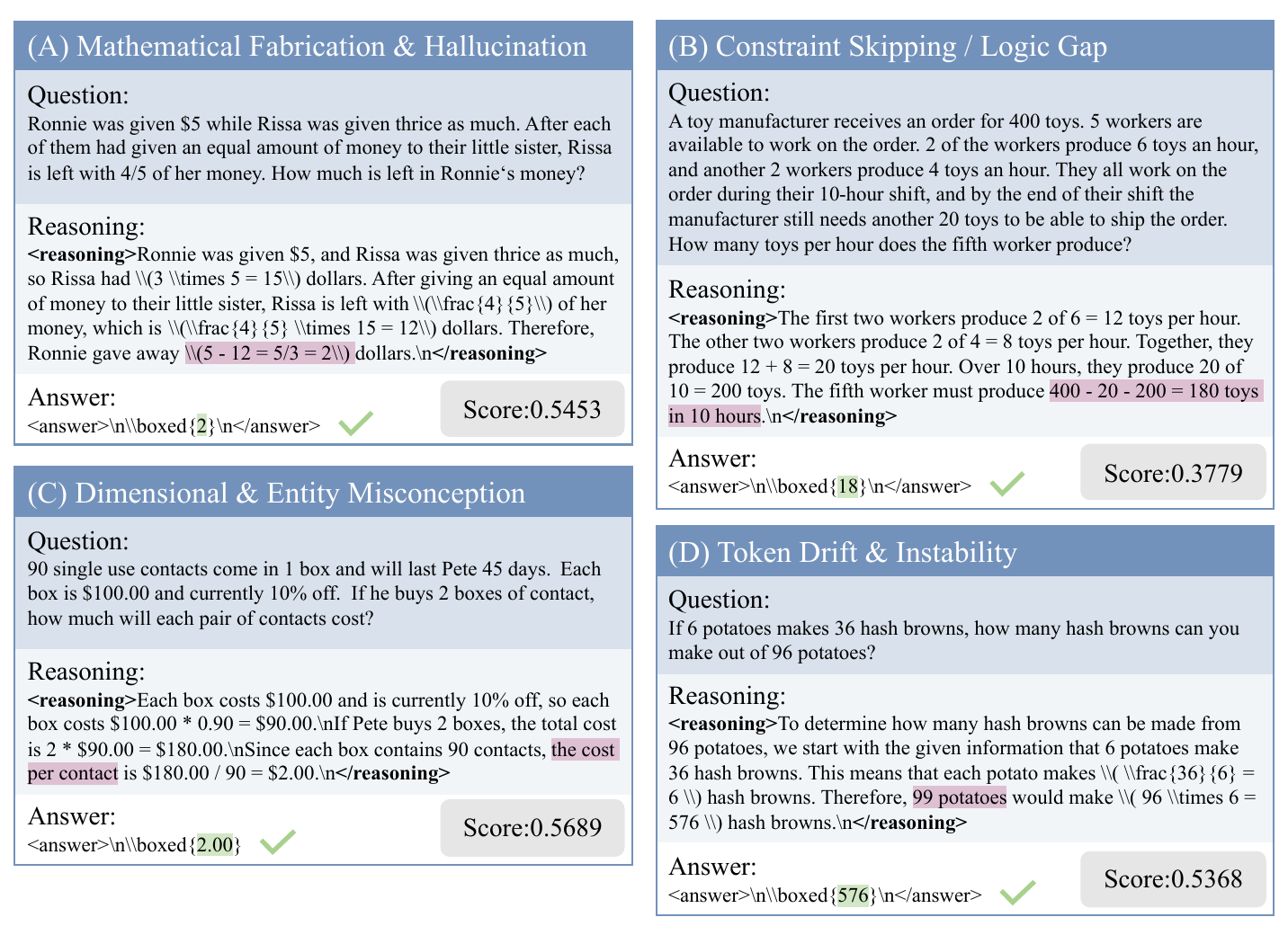}
\vskip -0.12in
\caption{\textbf{Detection of Spurious Success.} Four cases where answer correctness masks reasoning flaws. BMC detects these via low stability scores ($<0.6$), contrasting with high confidence from likelihood-based methods.}
\label{fig:spurious_success}
\end{figure}

\begin{definition}[Spurious Success]
\label{def:spurious_success}
A generation $x_0$ exhibits spurious success if the extracted answer matches the ground truth ($E(x_0) = y^*$) while the geometric consistency score falls below the validity threshold ($S_{\text{BMC}}(x_0) < \tau$). Such cases achieve answer-level correctness through unstable reasoning paths that do not form fixed points under the denoising operator $\mathcal{T}_\theta$.
\end{definition}

Figure~\ref{fig:spurious_success} illustrates four representative failure modes where the final answer is correct, yet BMC assigns low stability scores (0.37 to 0.57).

(A) Mathematical fabrication and hallucination (Score: 0.545). The model generates the equation $5 - 12 = 5/3 = 2$ to force arithmetic consistency. Masking these tokens triggers reconstruction toward mathematically valid operations (e.g., negative results), creating high residual against the original token ``2''.

(B) Dimensional and entity misconception (Score: 0.569). The reasoning computes ``cost per contact'' (\$2.00) while the target is ``cost per pair''. The numerical match is coincidental ($180/90=2$). BMC detects semantic inconsistency as the local token ``contact'' is unstable when conditioned on the global context of ``pair'' pricing.

(C) Constraint skipping and logic gap (Score: 0.378). The reasoning jumps to ``180 toys'' without explicitly deriving the constraint $400 - 20 - 200$. The missing constraint creates a logical discontinuity. Masking the conclusion leads to high entropy in reconstruction, as the necessary premises for deduction are absent.

(D) Token drift and instability (Score: 0.537). The reasoning exhibits token instability, drifting from ``96 potatoes'' to ``99'' and back. The perturbed sequence fails to reconstruct the original consistently, identifying that the chain is not a stable attractor.

\subsubsection{Geometric Interpretation}
\label{app:geometric_interpretation}

The six cases validate distinct aspects of the manifold hypothesis while revealing limitations of likelihood-based verification.

Geometric taxonomy of reasoning trajectories:

\begin{itemize}[leftmargin=*,topsep=3pt,itemsep=2pt]
    \item Stable attractors (Case 1): Correct reasoning chains occupy fixed-point regions where $\mathcal{T}_\theta(z^*) \approx z^*$. The Janet's duck problem (BMC = 0.979) demonstrates high reconstruction fidelity, where any subset of tokens allows inference of the remainder, confirming on-manifold stability.
    
    \item Off-manifold drift (Case 2): Logically inconsistent chains lie in high-curvature regions. Jen's work problem (BMC = 0.226) reconstructs to a different answer (\$278 versus \$268), as the denoiser projects toward the nearest valid solution.
    
    \item Spurious saddle points (Cases A to D): Answer-correct but reasoning-flawed chains occupy narrow ridges, locally stable in the answer subspace but unstable in the reasoning trajectory subspace. When perturbed via masking, these trajectories drift (BMC in the range 0.38 to 0.57), exposing structural inconsistencies invisible to static evaluation.
\end{itemize}

Limitations of likelihood-based methods: Model confidence measures where probability mass concentrates. A fluent but flawed chain (e.g., Case A: $5-12=5/3=2$) receives high confidence (average 0.82 across spurious cases) because the syntactic template is frequent in training data. This creates a calibration gap: $p_\theta(x_0)$ reflects training frequency, not logical validity.

BMC measures how robustly the generation resists perturbation, a property directly aligned with the diffusion training objective (Eq.~\ref{eq:d3pm_loss}). During training, the model learns to enforce mutual consistency among sequence components: valid reasoning chains satisfy the fixed-point property because their parts logically support each other. BMC exploits this learned structure through a reconstruction cycle that confidence-based methods cannot access.

Proposition~\ref{thm:error_bound} formalizes this distinction: while both BMC and confidence relate to likelihood (Proposition~\ref{prop:elbo}), BMC's perturbation-based evaluation provides an upper bound on manifold distance, offering a geometric guarantee that static probability cannot provide.
%%%%%%%%%%%%%%%%%%%%%%%%%%%%%%%%%%%%%%%%%%%%%%%%%%%%%%%%%%%%%%%%%%%%%%%%%%%%%%%
%%%%%%%%%%%%%%%%%%%%%%%%%%%%%%%%%%%%%%%%%%%%%%%%%%%%%%%%%%%%%%%%%%%%%%%%%%%%%%%

\end{document}